\PassOptionsToPackage{sort}{natbib}
\documentclass{article}

\usepackage[preprint]{neurips_2026}
\usepackage[utf8]{inputenc}
\usepackage[T1]{fontenc}
\usepackage{hyperref}
\usepackage{url}
\usepackage{booktabs}
\usepackage{amsfonts}
\usepackage{amsmath,amssymb,amsthm,mathtools}
\usepackage{nicefrac}
\usepackage{microtype}
\usepackage{xcolor}
\usepackage{graphicx}
\usepackage{subcaption}
\usepackage{multirow}
\usepackage{array}
\usepackage{enumitem}
\usepackage{arydshln}
\usepackage{csquotes}
\usepackage{wrapfig}
\usepackage{nicefrac}
\usepackage{siunitx}

\setcitestyle{numbers,square}
\renewcommand{\paragraph}[1]{\textbf{#1}}

\newtheorem{theorem}{Theorem}
\newtheorem{lemma}[theorem]{Lemma}

\newtheorem{corollary}[theorem]{Corollary}
\theoremstyle{definition}

\providecommand{\R}{\mathbb{R}}
\providecommand{\eps}{\varepsilon}

\providecommand{\Cstrict}{\mathcal{C}}
\providecommand{\Cbar}{\overline{\mathcal{C}}}

\providecommand{\esssup}{\operatorname*{ess\,sup}}

\newcommand{\E}{\mathbb{E}}
\newcommand{\Prob}{\mathbb{P}}

\newcommand{\Rstar}{R^{\star}}
\newcommand{\RPO}{R_{\mathrm{PO2}}}

\title{Grid Games: The Power of Multiple Grids \\ for Quantizing Large Language Models}

\author{Vage Egiazarian\thanks{Equal contributions.}\\
ISTA \\ 
\And
Erik Schultheis\footnotemark[1] \\
ISTA \\
\And
Andrei Panferov \\
ISTA \\
\AND
Earl Killian\\
Aril Computer\\
\And
Torsten Hoefler\\
ETH Zürich\\
\And
Dan Alistarh\thanks{Correspondence to: \texttt{dan.alistarh@ista.ac.at}.}\\
ISTA \& Red Hat AI\\
}

\begin{document}
\vspace{-1em}
\maketitle
\vspace{-1em}
\begin{abstract}
A major recent advance in quantization is given by microscaled 4-bit formats such as NVFP4 and MXFP4, quantizing values into small groups sharing a scale, assuming a fixed floating-point grid. 
In this paper, we study the following natural extension: assume that, for each group of values, we are free to select the ``better'' among two or more 4-bit grids marked by one or more bits in the scale value.
We formalize the \textit{power-of-two-grids (PO2)} problem, and provide theoretical results showing that practical small-group formats such as MXFP or NVFP can benefit significantly from PO2 grids, while the advantage vanishes for very large groups. 
On the practical side, we instantiate several grid families, including 1) PO2(NF4), which pairs the standard NF4 normal grid with a learned grid, 2) MPO2, a grid pair that is fully learned over real weights and activations, 3) PO2(Split87), an explicit-zero asymmetric grid and 4) SFP4, a TensorCore-implementable triple which pairs NVFP4 with two shifted variants.  
Results for post-training quantization of standard open models and pre-training of Llama-like models show that adaptive grids consistently improve accuracy vs single-grid FP4 under both weight-only and weight+activation. Source code is available at \href{https://github.com/IST-DASLab/GridGames}{github}.
\end{abstract}

\vspace{-1em}
\section{Introduction}
\label{sec:intro}
\vspace{-0.5em}

Quantization of model states, especially weights and activations, has become a standard way to reduce the serving cost of large language models (LLMs)~\citep{nagel2021white,frantar2023gptq,lin2024awq,dettmers2023qlora}.
The recent addition of hardware support for microscaled 4-bit precision datatypes such as NVFP4 and MXFP4 makes quantization attractive not only for inference, but also for training~\citep{rouhani2023microscaling,khailany2025fp4,castro2025quartet,panferov2026quartetii}.
In these formats, a small group of tensor values, typically 16 or 32, shares one scale; normalized values are then represented by a fixed low-bit codebook.
Thus, a key design question is which grid should receive the scarce 4-bit budget.\looseness=-1

Most existing methods focus on 4-bit precision due to the good compression-accuracy trade-off, and assume that the grid is fixed in advance.
Examples are the INT4 uniform grid and the FP4 E2M1 floating-point grid, whose normalized levels are denser near zero.
The NF4 grid uses a quantile-inspired representation targeting Gaussian weights~\citep{dettmers2023qlora}, while Lloyd--Max and learned-codebook methods fit a grid to a calibration distribution~\citep{lloyd1982least,max1960quantizing,malinovskii2025higgs}.
Orthogonal approaches seek to improve rounding error: GPTQ applies Hessian-aware layerwise corrections~\citep{frantar2023gptq}, AWQ protects activation-salient weights~\citep{lin2024awq}, and QuaRot/QuIP\# use incoherence transforms~\citep{ashkboos2024quarot,chee2024quip}.
Here, we focus on an orthogonal question: what if, when encoding a group of values, we can pick the best option among a small set of fixed grids?


In this paper, we investigate this idea from theory to kernels.
A first result is that, if groups are large, the benefit of per-block grids  disappears: every block sees essentially the same normalized distribution, so a single well-chosen grid is enough. 
However, for the small (e.g., 16 or 32) block sizes used in practice, normalized groups are heterogeneous: one could have \emph{spiky} groups (containing an outlier), or \emph{flat} groups, with several entries of comparable magnitude. 
A specialized dual-codebook quantizer, choosing between  grids based on group statistics, can leverage this dichotomy. 

\paragraph{Contributions.}
We provide a systematic investigation of group-adaptive grid choices, establishing the first theoretical treatment of this approach, and providing multiple practical findings: 
\begin{enumerate}[topsep=0pt,itemsep=2pt,leftmargin=1.5em]
    \item We formalize the problem of multiple-codebook blockwise (group) quantization. Although it is NP-hard in general, in the large-group limit, the advantage of having multiple grids vanishes. 
    \item Yet, in standard microscaling settings, multiple grid choices can be effective. We provide a \emph{competitive analysis} framework that relates the loss achievable via a grid family over a distribution to the \textit{best single learnable grid} over that distribution, via a concavity argument. For instance, we show that choosing the best between INT4 and FP4 per group (IF4~\citep{cook2026adaptiveblockscaled}) is useful, but sub-optimal: this grid pair has, roughly, 10\% worse error than the best single learned grid for the model. 
    \item We define and investigate several \textit{power-of-two (PO2) grid families}: MPO2, a pair of grids optimized over pooled weight and activation groups; PO2(NF4), which pairs the standard NF4 Normal grid~\citep{dettmers2023qlora} with a residual grid; PO2(Split87), an explicit-zero asymmetric grid we developed.
    \item  Our main practical contribution is \textit{SFP4 (Shifted NVFP4)}, a grid triple which keeps the standard NVFP4 E2M1 codebook, but adds \textit{two} optimally-shifted NVFP4  as grid options. We show that matmuls using SFP4 can be decoded correctly using NVIDIA Blackwell TensorCores, while being accuracy-competitive with general learned grids or even some grid pair constructions.   
    \item We provide extensive evaluations across multiple benchmarks: 1) fitting standard distributions; 2) post-training weight-and-activation quantization of LLMs, and 3) pre-training experiments of small language models. 
    
\end{enumerate}

Our results show that certain grid pairs, such as the PO2(NF4) pair that encodes a Normal-distribution prior plus a residual,  can \emph{surpass} the accuracy of the best individually-learned grid for the model or layer, due to their adaptive nature and the practical validity of the Normal prior. This establishes the accuracy boost of the PO2 approach, and suggests it is an alternative to current fixed-grid approaches. 

\vspace{-1em}
\section{Related Work}
\label{sec:related}

\paragraph{Scalar and learned grids.} 
Uniform integer grids (e.g., INT4) and floating-point grids (e.g., FP4) are the default hardware choices.
The NF4 grid was proposed as an alternative as part of QLoRA~\citep{dettmers2023qlora}; its main idea is to choose grid points using a normal-distribution prior.
Other grid constructions~\citep{malinovskii2025higgs} used Lloyd--Max quantizers to optimize scalar MSE for a target distribution.
Our \textit{dual-grid} construction is complementary to single-grid choices: each constituent grid can be either optimized w.r.t. a target distribution, or learned over real-world tensors. 
The recent \textbf{BOF4} work~\citep{blumenberg2025bof4} derives Lloyd/EM-style updates for codebooks that minimize MAE or MSE in the original, unnormalized weight space. Their signed-normalization variant, BOF4-S, can be viewed as a  special case of our dual-grid formulation with grid pair $(G,-G)$, where the selector is not chosen to minimize the error, but is fixed deterministically by the sign of the maximum-magnitude entry. As we show, BOF4 is competitive with simple adaptive pairs such as IF4, but can lose to general grid pairs. 
More broadly, \textbf{LO-BCQ}~\citep{elangovan2025lobcq} decomposes tensors into blocks, assigns each block to one of several learned scalar codebooks, and stores a per-block selector. 
One key difference is in the codebook constraints: LO-BCQ learns up to 16 arbitrary 16-entry codebooks; this yields good PTQ accuracy but induces selector overhead and a decode path for learned codebooks. 

\paragraph{IF4.}
Concurrent work~\citep{cook2026adaptiveblockscaled} proposes a special case of per-block grid selection, choosing between INT4 and FP4-like grids, recording the chosen grid as metadata in the group scale.
Our theory explains both the strengths and limitations of this approach: 
although it improves on NVFP4, IF4 remains inferior to statically-optimal trained grids, both in theory and in practice, and is roughly on par with our SFP4 construction, which has the advantage of being implementable on current hardware.  

\paragraph{Quantization methods.}
Methods like GPTQ~\citep{frantar2023gptq}, AWQ~\citep{lin2024awq}, and SqueezeLLM~\citep{kim2024squeezellm} aim to reduce the accuracy effects of quantization by changing weights, scales, or basis before rounding.
Here, we focus on the \textit{capacity} of the codebooks, which is complementary.
Specifically, GPTQ composes well with dual-grid quantization, two-grid quantization can benefit from Hadamard transforms. 

\paragraph{Quantized training.}
Another related direction is FP4 training~\citep{chmiel2025fp4wayfullyquantized,castro2025quartet,panferov2026quartetii,tseng2025training}. 
Four-Over-Six~\citep{cook2025fouroversix} modifies the scale rule for FP4 blocks, searching over a small set of divisors rather than always setting the scale to $\max/6$.
This is orthogonal to grid selection: they change the scale magnitude, while dual-grid methods change the set of normalized representable values.
As we show, some of our proposals are compatible with scaling methods such as Four-Over-Six.

\section{PO2: The Power of Two Grids for Quantization}

\subsection{Problem Setup}
\label{sec:setup}

We consider scalar quantization applied independently to blocks of size $g$.
A block is written as $X=(X_1,\ldots,X_g)\in\mathbb{R}^g$,
where the coordinates are i.i.d. real-valued random variables.  Let us define
        $Z_i:=|X_i|, \text{ and }
        M_g:=\max_{1\le i\le g} Z_i.$
The quantizer uses absmax normalization: if $M_g>0$, define
$
        A_{i,g}:=Z_i/M_g\in[0,1].
$
If $M_g=0$, the whole block is zero; in this case we set $A_{i,g}=0$ and 
its quantization error is zero.  
To simplify notation, assume w.l.o.g. on the positive half of an eight-level signed grid, and define the (closed) set of grids as: 
\[
        {\mathcal C}
        :=\{B:0=b_0\le b_1\le\cdots\le b_6\le b_7=1\} \,.
\]  
For $a\in[0,1]$, let
$
        \psi_B(a):=\min_{0\le j\le 7}(a-b_j)^2.
$
In other words, $\psi_B(a)$ is the squared error incurred by rounding the
normalized magnitude $a$ to its nearest grid point in $B$.  The corresponding
signed absmax quantization loss for a block is
\[
        L_B^{(g)}(X)
        := M_g^2\,\frac1g\sum_{i=1}^g \psi_B(A_{i,g}).
\]
This formula is the only property of the quantizer needed in the theorem below.
Further, we define the \textit{best single-grid risk} and adaptive two-grid, or power-of-two-grids (PO2), risks, respectively, as
\[
        R_g^\star
        :=\min_{B\in\overline{\mathcal C}}\mathbb{E}L_B^{(g)}(X) \textnormal{ \quad and \quad } R_{\mathrm{PO2},g}^\star
        :=\inf_{B_1,B_2\in\overline{\mathcal C}}
          \mathbb{E}\!\left[\min\{L_{B_1}^{(g)}(X),L_{B_2}^{(g)}(X)\}\right].
\]

\subsection{Theoretical Analysis}
\label{sec:theoretical-analysis}

Our first result shows that the advantage of selecting
between two fixed absmax-normalized grids disappears as the block size grows to infinity. It appears intuitive following the Glivenko–Cantelli and Central Limit theorems. The full argument is in Appendix~\ref{app:proofs}.

\begin{theorem}[No asymptotic advantage]
\label{thm:no-asymptotic-advantage}
Let \(X_1,X_2,\ldots\) be i.i.d. real-valued random variables, and set
\(Z_i=|X_i|\).  Define $
        x_{\max}:=\operatorname*{ess\,sup} Z_1
        =
        \inf\{x\in\mathbb R:\mathbb P(Z_1\le x)=1\}
        $ and assume that $x_{\max}$ is finite. 
Equivalently,
        $\mathbb P(Z_1\le x_{\max})=1$
and, for every \(\varepsilon>0\),
$
        \mathbb P(Z_1>x_{\max}-\varepsilon)>0 .
$
Then  
\[
        \lim_{g\to\infty}
        \bigl(R_g^\star-R_{\mathrm{PO2},g}^\star\bigr)=0.
\]
\end{theorem}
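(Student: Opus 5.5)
Since $R_{\mathrm{PO2},g}^\star\le R_g^\star$ trivially (take $B_1=B_2$), it suffices to show $\limsup_g (R_g^\star-R_{\mathrm{PO2},g}^\star)\le 0$. The plan is to compare every block loss with a deterministic limit functional. Let $F$ be the law of $Z_1$ and define, for $B\in\Cbar$,
\[
\Phi(B):=\E\,\psi_B(Z_1/x_{\max}),\qquad
L_B^{(g)}(X)=M_g^2\int\psi_B(a/M_g)\,d\hat F_g(a),
\]
where $\hat F_g$ is the empirical law of $Z_1,\dots,Z_g$. (If $x_{\max}=0$, every block is zero and both risks vanish, so we assume $x_{\max}>0$.) We will show
\[
\Delta_g:=\E\Bigl[\sup_{B\in\Cbar}\bigl|L_B^{(g)}(X)-x_{\max}^2\Phi(B)\bigr|\Bigr]\to0 .
\]
Granting this, for any $B_1,B_2$ we have
\[
\E\min\{L_{B_1}^{(g)},L_{B_2}^{(g)}\}\ \ge\ x_{\max}^2\min\{\Phi(B_1),\Phi(B_2)\}-\Delta_g\ \ge\ x_{\max}^2\min_B\Phi(B)-\Delta_g .
\]
Also $R_g^\star\le x_{\max}^2\min_B\Phi(B)+\Delta_g$. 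Hence $0\le R_g^\star-R_{\mathrm{PO2},g}^\star\le 2\Delta_g\to0$. The minimum of $\Phi$ is attained because $\Cbar$ is compact and $\Phi$ is continuous: $\psi_B$ is $2$-Lipschitz in $B$ in the sup norm, uniformly in $a\in[0,1]$.

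To bound $\Delta_g$, split the difference into two parts.
\begin{enumerate}
\item \textbf{Scale part.} Since $M_g\uparrow x_{\max}$ almost surely, by the assumption $\Prob(Z_1>x_{\max}-\eps)>0$ and independence, and $M_g\le x_{\max}$, bounded convergence gives $\E|M_g^2-x_{\max}^2|\to0$. Because $0\le\psi_B\le1$, this term is uniform in $B$.
\item \textbf{Normalized part.} We bound
\[
\sup_B\Bigl|\tfrac1g\sum_i\psi_B(Z_i/M_g)-\Phi(B)\Bigr| .
\]
Use the fact that $a\mapsto\psi_B(a)$ is $2$-Lipschitz on $[0,1]$ for every $B$. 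Then $|\psi_B(Z_i/M_g)-\psi_B(Z_i/x_{\max})|\le 2Z_i\,|1/M_g-1/x_{\max}|\le 2\,|x_{\max}/M_g-1|$. This quantity tends to $0$ almost surely, and it is bounded once we handle the event where $M_g$ is small. For that, note that $\psi_B(A_{i,g})\le1$ always, so we can cap the term at $2$ and use bounded convergence.
\end{enumerate}

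It remains to show a uniform law of large numbers,
\[
\sup_{B\in\Cbar}\Bigl|\tfrac1g\sum_i\psi_B(Z_i/x_{\max})-\Phi(B)\Bigr|\to0\quad\text{a.s.}
\]
This is the step that needs care, and it is where Glivenko--Cantelli enters. I would prove it by an $\eps$-net argument. Compactness of $\Cbar$ gives a finite net $B^{(1)},\dots,B^{(N)}$ with sup-norm mesh $\eps/4$. The $2$-Lipschitz dependence on $B$ transfers the strong law of large numbers at the finitely many net points to all of $\Cbar$, up to an error of $\eps$. (Alternatively, one can write $\int\psi_B\,d\hat F_g$ and integrate by parts against $\hat F_g-F$, using that $\psi_B$ has total variation at most $1$ on $[0,1]$, which gives the bound $\le\|\hat F_g-F\|_\infty$.) All quantities are bounded by $x_{\max}^2$, so almost sure convergence upgrades to convergence of expectations, yielding $\Delta_g\to0$.

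The main subtlety I expect is measurability and attainment of the infimum in $R_{\mathrm{PO2},g}^\star$. We sidestep attainment entirely by bounding from below for arbitrary $B_1,B_2$. Measurability of the supremum follows by restricting to a countable dense subset of $\Cbar$, using continuity in $B$.
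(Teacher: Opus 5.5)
Your proposal is correct and follows essentially the same route as the paper. The shared steps are: a.s.\ convergence $M_g\to x_{\max}$; a uniform strong law over $\overline{\mathcal{C}}$ from a finite net plus the $2$-Lipschitz dependence of $\psi_B$ on $B$; replacing the random normalization by $x_{\max}$; and bounded convergence, which gives $\E\sup_B|L_B^{(g)}-x_{\max}^2\Phi(B)|\to0$ with $x_{\max}^2\Phi=\ell_\infty$.

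The differences are cosmetic. You finish with the direct sandwich $0\le R_g^\star-R_{\mathrm{PO2},g}^\star\le 2\Delta_g$, where the paper instead shows both risks converge to the same limit $V_\infty=W_\infty$. Your rescaling bound uses $Z_i\le x_{\max}$, which forces you to cap the term when $M_g$ is small; the paper uses $Z_i\le M_g$ to get $|A_{i,g}-Y_i|\le 1-M_g/x_{\max}$, so no truncation is needed.
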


\paragraph{Proof sketch.}
The assumption on the supremum implies that the block
maximum $M_g=\max_{i\le g}|X_i|$ converges almost surely to $x_{\max}$.  
Given this, the main idea in the proof is that, for a fixed grid $B$, 
the average normalized loss
$g^{-1}\sum_i\psi_B(Y_i)$ converges by the law of large numbers to
$\mathbb{E}\psi_B(Y_1)$.  Since the grid space ${\mathcal C}$ is
compact and the $\psi_B$ vary Lipschitz-continuous, this convergence is uniform over all grids.  Hence
$L_B^{(g)}$ converges uniformly over $B$ to the deterministic risk
$
        \ell_\infty(B):=x_{\max}^2\,\mathbb{E}\psi_B(|X_1|/x_{\max}).
$
Consequently, the best single-grid risk converges to
$\inf_B\ell_\infty(B)$.  The two-grid objective converges to
\[
        \inf_{B_1,B_2}\min\{\ell_\infty(B_1),\ell_\infty(B_2)\}
        =\inf_B\ell_\infty(B),
\]
because, in the limit, every large block has the same deterministic normalized
loss profile.  Thus per-block selection between two fixed grids cannot improve
on the best single fixed grid asymptotically.

\paragraph{The Interesting Bimodal Case.} The prior result suggests that, for two-grid choices to matter, we need to be in the small-group-size, microscaling regime. 
 Let
\[
\mu(X)=\frac1g\sum_i \frac{|X_i|}{M_g}
\]
be the average normalized magnitude of a group.
Two grids help only when block statistics vary from group to group. For example, small $\mu$ would indicate a \textit{spiky} block, dominated by one or more outliers; large $\mu$ indicates a \textit{flat} block with most entries close to the average.

\begin{wraptable}{r}{0.6\textwidth}
\centering
\caption{Student-$t$ fit to real LLM weight distributions.}
\label{tab:weight_fit}
\begin{tabular}{lcccc}
\toprule
Model & MLE $\nu$ & Kurtosis & Best QQ fit \\
\midrule
Qwen3-0.6B & 6.65 & 2.55 & $t_7$ ($r=0.9999$) \\
Qwen3-8B   & 10.98 & 1.17 & $\nicefrac{t_7}{t_{10}}$ ($r=0.9967$) \\
\bottomrule
\end{tabular}
\vspace{-\intextsep}
\end{wraptable}

Real transformer weights support this bi-modal view. As shown in Table~\ref{tab:weight_fit}, for Qwen3-0.6B and Qwen3-8B, global Student-$t$ fits have $\nu$ in the  range 7--11 and per-layer kurtosis varies widely, consistent with prior heavy-tail analyses of neural-network weights~\citep{martin2021implicit,hodgkinson2021multiplicative}.

\paragraph{Concavity.} Let $S$ and $F$ be a partition of the space of groups, e.g., into spiky and flat blocks for a fixed $\mu$ threshold, and let $p=\Prob(S)$.
For a grid family $\mathcal{G}$, the best risk achievable by any single grid when the spiky fraction is $p$ is
$V_{\mathcal{G}}(p):=\inf_{B\in\mathcal{G}}\left(pR_S(B)+(1-p)R_F(B)\right).$ Then, notice that: 

\begin{lemma}[Concavity]
\label{lem:concavity}
The function $V_{\mathcal{G}}(p)$ is concave in $p$.
\end{lemma}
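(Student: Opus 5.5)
The plan is to use the standard fact that a pointwise infimum of affine functions is concave. For each fixed grid $B\in\mathcal{G}$, define
\[
f_B(p):=pR_S(B)+(1-p)R_F(B)=R_F(B)+p\bigl(R_S(B)-R_F(B)\bigr),
\]
which is affine in $p\in[0,1]$. Here $R_S(B)$ and $R_F(B)$ are the conditional risks on spiky and flat groups. They depend on $B$ and on the partition, but not on the mixing weight $p$. Then $V_{\mathcal{G}}(p)=\inf_{B\in\mathcal{G}}f_B(p)$.

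First I will make explicit the modeling convention implicit in the definition. When $p$ varies, the conditional laws of groups within $S$ and within $F$ stay fixed, and only the mixture proportion changes. This is what makes $R_S(B)$ and $R_F(B)$ independent of $p$, so that each $f_B$ is genuinely affine.

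Next, take $p_1,p_2\in[0,1]$, $\lambda\in[0,1]$, and $p=\lambda p_1+(1-\lambda)p_2$. For every $B\in\mathcal{G}$, affinity gives
\[
f_B(p)=\lambda f_B(p_1)+(1-\lambda)f_B(p_2)\ge \lambda V_{\mathcal{G}}(p_1)+(1-\lambda)V_{\mathcal{G}}(p_2).
\]
The inequality holds because each $f_B(p_i)\ge V_{\mathcal{G}}(p_i)$ by definition of the infimum. Taking the infimum over $B$ on the left gives $V_{\mathcal{G}}(p)\ge\lambda V_{\mathcal{G}}(p_1)+(1-\lambda)V_{\mathcal{G}}(p_2)$, which is concavity.

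The only potential obstacle is finiteness. The risks are nonnegative, so $V_{\mathcal{G}}>-\infty$ automatically. They are finite whenever $\mathbb{E}M_g^2<\infty$, since $\psi_B\le 1$, so the infimum is a real-valued concave function. If some risks were $+\infty$, the same argument still goes through with extended-real arithmetic, because all terms are nonnegative. No compactness of $\mathcal{G}$ is needed, since we work with the infimum rather than a minimum.
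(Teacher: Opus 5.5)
Your proof is correct and is the same argument the paper relies on. The paper states the lemma without a written proof (prefaced by ``notice that''), implicitly using that $V_{\mathcal{G}}$ is a pointwise infimum of the affine maps $p\mapsto pR_S(B)+(1-p)R_F(B)$, which is exactly what you verify, and your remarks on fixing the conditional laws and on finiteness are fine bookkeeping (the latter is already covered by the paper's standing finite-second-moment assumption).
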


Concavity is the basic reason why two specialized grids can beat a single one: a single grid lies on the concave envelope of mixtures, while adaptive selection pays the two conditional optima separately. 
Consequently, even finding a grid pair where each grid  \textit{approximates} the optima across the two group splits (e.g., finding a good grid for spiky groups and a good one for  flat ones) can lead to competitive results globally, relative to the best single grid for the distribution. This is our  main motivating result:   

\begin{corollary}
\label{thm:competitive}
Let $R_S^\star=\inf_B R_S(B)$ and $R_F^\star=\inf_B R_F(B)$.
If $R_S(B_S)\le\alpha_S R_S^\star$ and $R_F(B_F)\le\alpha_F R_F^\star$, then
$\RPO(B_S,B_F)\le \max\{\alpha_S,\alpha_F\}\Rstar.$

\end{corollary}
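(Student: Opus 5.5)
The plan is to compare the two-grid risk with a specific selector, namely the partition itself, and then use the definition of $V_{\mathcal G}(p)$ at the true spiky fraction $p=\Prob(S)$. I interpret $R_S(B)$ and $R_F(B)$ as the contributions of spiky and flat groups, normalized so that $\E L_B^{(g)}(X)=pR_S(B)+(1-p)R_F(B)$. Concretely, $R_S(B)=\E[L_B^{(g)}(X)\mid S]$ and $R_F(B)=\E[L_B^{(g)}(X)\mid F]$. Then $\Rstar=V_{\Cbar}(p)$.

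\textbf{Step 1: upper bound by the partition selector.} Pointwise, $\min\{L_{B_S}(X),L_{B_F}(X)\}\le L_{B_S}(X)\mathbf 1_S(X)+L_{B_F}(X)\mathbf 1_F(X)$. Taking expectations gives
\[
\RPO(B_S,B_F)\le pR_S(B_S)+(1-p)R_F(B_F)\le \alpha_S pR_S^\star+\alpha_F(1-p)R_F^\star .
\]
Since all risks are nonnegative, the right-hand side is at most $\max\{\alpha_S,\alpha_F\}\bigl(pR_S^\star+(1-p)R_F^\star\bigr)$.

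\textbf{Step 2: compare with the best single grid.} For any single grid $B$, we have $R_S(B)\ge R_S^\star$ and $R_F(B)\ge R_F^\star$. Hence $pR_S(B)+(1-p)R_F(B)\ge pR_S^\star+(1-p)R_F^\star$. Taking the infimum over $B$ gives $\Rstar\ge pR_S^\star+(1-p)R_F^\star$. This is the inequality Lemma~\ref{lem:concavity} expresses geometrically: $V$ lies above the chord of the conditional optima, because the infimum of a sum is at least the sum of the infima.

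Combining the two steps yields the claim. The main obstacle is only bookkeeping: the conditional risks must be defined so that the mixture identity holds exactly, and the degenerate cases $p\in\{0,1\}$ must be handled, where one term simply vanishes. I do not expect any analytic difficulty. Compactness of $\Cbar$ is not even needed, since only infima are used.
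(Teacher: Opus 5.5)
Your argument is correct and is essentially the paper's: the paper gives no separate proof and instead obtains the corollary from Lemma~\ref{lem:concavity}, bounding $\RPO(B_S,B_F)$ by the partition selector and using that the concave $V(p)$ lies above its chord $pV(1)+(1-p)V(0)=pR_S^\star+(1-p)R_F^\star$; you get this chord inequality directly as ``infimum of a sum $\ge$ sum of infima''. Your final multiplication silently needs $\max\{\alpha_S,\alpha_F\}\ge 0$, which is automatic (indeed $\ge 1$) unless $R_S^\star=R_F^\star=0$, and which is the intended reading of a competitive factor anyway.
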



\subsection{The Competitive Analysis View} 
\label{sec:int-vs-fp}
This discussion motivates the following approach for improved quantization: 
\begin{enumerate}
    \item \textbf{Fix Prior}: identify a ``prior'' on the distribution of the groups we wish to quantize, e.g., split into spiky groups and flat groups. 
    \item \textbf{Find Grids}: Find two grids, each optimized for one of the group types.
    \item \textbf{Assignment}: When quantizing groups, assign each individual group to the ``better'' grid corresponding to its contents, in terms of quantization mean-square error (MSE).
\end{enumerate}

The natural target metric suggested by Corollary~\ref{thm:competitive} is to compare the risk (expected MSE) to the \textit{best single grid} that could be learned w.r.t. the given distribution. 

\textbf{Case study: INT4 and FP4.}
We instantiate the competitive analysis for the INT4/FP4 pair considered also by the IF4 concurrent work~\citep{cook2026adaptiveblockscaled}.
For weights, we use both a standardized Student-$t_\nu$ model with $\nu$ in the range matching real transformers (Table~\ref{tab:weight_fit}), as well as real samples from Qwen models. We examine block size $g = 16$. 

INT4 spaces its levels uniformly, while 
FP4 concentrates resolution near the origin.
Intuitively, this should make FP4 well-suited for spiky blocks, while INT4 is better for flat blocks.
To quantify this, we apply Corollary~\ref{thm:competitive}: split blocks into spiky ($\mu(X) \leq \tau$) and flat ($\mu(X) > \tau$), estimate the conditional factors $\alpha_S$ and $\alpha_F$ via Monte Carlo ($30{,}000$ training blocks, weighted Lloyd iterations for the conditional static optimum, $60{,}000$ validation blocks), and sweep $\tau$ to minimize $\beta = \max\{\alpha_S, \alpha_F\}$.
The resulting competitive factors are $\beta \approx 1.10$ for Normally-distributed data and $\beta \approx 1.13$ for Student-$t_5$ data at $g = 16$. Thus, we find that IF4 is  \textit{inferior} to the best possible static grid by 10--15\%, across distributions and parametrizations.
Thus, this bound reveals a structural limitation: both INT4 and FP4 are ``too suboptimal'' for their respective block classes.


\section{Proposed PO2 Grid Families}
\label{sec:methods}

We now focus on designing concrete grid pairs, aiming for pairs that are \textit{better than statically optimal}.

\paragraph{Training protocol.}
All learned grids are trained on absmax-normalized groups with block size $g=16$.
For a tensor block $x$, we normalize by $M=\max_i|x_i|$, evaluate MSE after de-normalization, and assign each block to the grid achieving lower MSE.
Unless otherwise stated, training uses pooled weight and activation groups from Qwen3-8B, Qwen3.5-9B, Mistral-7B-v0.3, and Phi-4 (2M sampled groups, balanced 1:1 between weights and activations).

The algorithm is summarized in Figure~\ref{fig:algorithm}: given a primary grid $B_1$ (fixed or learned), we collect the blocks on which $B_1$ incurs high error, initialize a secondary grid $B_2$ on this residual, and then alternate block assignment with Lloyd updates on both partitions. The grids are listed in Appendix~\ref{app:grids}. 

\paragraph{Snapping to FP8.} To maintain hardware feasibility, after training the grids, each point is rounded (snapped) to the nearest E4M3 value. We did not find this to yield significant error, see Appendix \ref{sec:format-snapping}.

\begin{figure}[t]
\begin{center}
\fbox{\parbox{0.92\textwidth}{
\textbf{The PO2 Algorithm for Residual Grid Learning}\\[4pt]
\textbf{Input:} Primary grid $B_1$ (fixed or learnable), pool of absmax-normalized blocks $\{x_1,\ldots,x_n\}$\\
\textbf{Output:} Grid pair $(B_1, B_2)$
\begin{enumerate}[leftmargin=1.5em,topsep=3pt,itemsep=1pt]
\item For each block $x_i$, compute $L_{B_1}(x_i)$.
\item Collect residual pool $\mathcal{R}=\{x_i : L_{B_1}(x_i)$ exceeds the median$\}$.
\item Initialize $B_2$ via weighted Lloyd iterations on $\mathcal{R}$.
\item \textbf{Repeat} until convergence:
  \begin{enumerate}[label=(\alph*),leftmargin=1.5em,itemsep=0pt]
  \item Assign each block to $\arg\min_{j\in\{1,2\}} L_{B_j}(x_i)$.
  \item Run weighted Lloyd on the blocks assigned to each updatable grid.
  \end{enumerate}
\item Snap code points to target format (e.g., FP8 E4M3).
\end{enumerate}
}}
\end{center}
\caption{Generic residual grid learning for PO2. When $B_1$ is fixed (NF4, Split87), only $B_2$ is updated in step~4(b). For MPO2, both grids are initialized and updated.}
\label{fig:algorithm}
\end{figure}

\subsection{PO2(NF4): A Grid Pair with a Normal and a Residual}

The QLoRA NF4 grid~\citep{dettmers2023qlora} is designed for Gaussian-distributed weights, using a quantile-based construction that places code points at equal-probability intervals under $\mathcal{N}(0,1)$.

Our PO2(NF4) proposal fixes the primary grid to a version of NF4 that is rounded (snapped) to FP8 and learns only the residual grid via the protocol in Figure~\ref{fig:algorithm}.
Our analysis predicts that this should be particularly effective for Student-$t$-like distributions with moderate degrees of freedom (e.g., $\nu \in [7,10]$, see Table~\ref{tab:weight_fit}): NF4 is near-optimal on the spiky tail, while the learned residual grid should specialize for the flatter blocks where NF4 wastes resolution. 
The resulting grids are visualized in Figure~\ref{fig:po2-grids}, confirming this intuition. 

\begin{figure}
    \centering
    \includegraphics[width=1\linewidth]{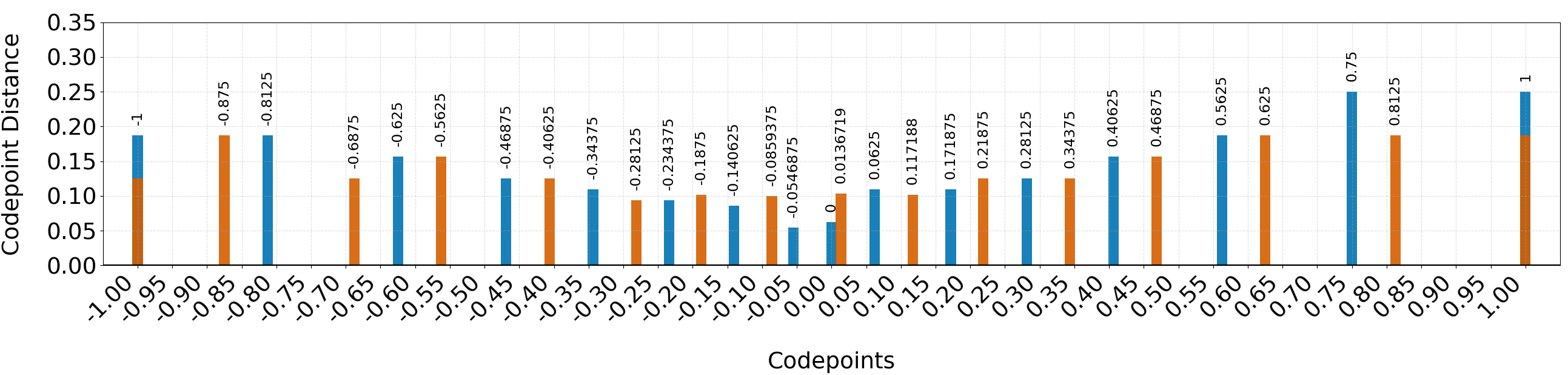}
    \caption{A 2D visualization of the PO2(NF4) grid pair. The ``height'' of each grid point representation is the distance to its right neighbor (except for the rightmost points, where the height represents the distance to the left). Exact codepoint values are annotated on top.}
    \label{fig:po2-grids}
\end{figure}

\subsection{PO2(Split87): Asymmetric MSE-Optimized Grid Plus Its Residual}

Is NF4 the ``optimal'' grid for a Gaussian prior?
We show this is not the case: we propose the Split87 grid, that starts from the same Gaussian prior but departs from QLoRA's quantile-based construction.
Whereas NF4 allocates code points to equalize probability mass under $\mathcal{N}(0,1)$, Split87 optimizes directly for MSE on the absmax-normalized distribution, which better reflects the quantization objective.
Crucially, Split87 encodes asymmetry with an explicit zero: eight negative levels, one zero, and seven positive nonzero levels.
The explicit zero eliminates dead-zone error for small values, which is important specifically for activations. We found it surprising that the 8+0+7 split consistently outperformed the 7+0+8 split in our experiments. 

The primary Split87 grid is learned by coordinate descent on the blockwise MSE objective over the training pool of tensors, then snapped to FP8.
The secondary grid is trained on the residual population following the same protocol as PO2(NF4).
Table~\ref{tab:wa-aggregate} shows that PO2(Split87) gives the lowest KL-Divergence and best average benchmarks recovery in the weight+activation setting.

\subsection{MPO2: Fully Learned Grid Pair}

Our next proposal is MPO2, which is a ``Meta'' grid pair that removes all distributional priors and learns both grids over pooled normalized weights and activation groups from our pool of models.
We first train a primary 16-level grid on the full pool via Lloyd iterations, and then train a secondary grid on this residual population.
We then alternate assignment and Lloyd updates for both grids until convergence.
The resulting FP8-rounded pair is listed in Appendix~\ref{app:grids}.

\subsection{SFP4: TensorCore-Compatible Shifted NVFP4}
\label{sec:gridflip}

The learned grids above demonstrate the statistical potential of dual-grid quantization, but current NVFP4 tensor cores decode only E2M1 values.
SFP4 asks how much of the dual-grid advantage can be recovered while staying hardware-compatible with current TensorCores.

The core idea is to offer, alongside the standard E2M1 grid centered at zero, two \textit{shifted} grids centered at $\pm c \cdot s$ from zero, where $c$ is a small per-tensor constant and $s$ is the block scale.
Blocks whose values cluster slightly below or above zero can then be quantized with denser coverage around their ``center of mass.''
This is compatible with the Four Over Six scale optimization~\citep{cook2025fouroversix}, which adjusts scale \textit{magnitude}; SFP4 adjusts the grid \textit{center}.

\paragraph{Three shifted NVFP4 grids.}
For each weight block with scale $s$ and E2M1 codes $c_k$, the encoder selects among three grids.
Grid~$\mathcal{A}$ decodes as standard NVFP4: $\hat{w}_k = s \, g(c_k)$, where $g$ maps E2M1 codes to their floating-point values.
Grids~$\mathcal{B}^{+}$ and $\mathcal{B}^{-}$ shift the grid center away from zero:
\[
\mathcal{B}^{+}\!:\;\hat{w}_k = s\,(g(c_k) + c_{+}),
\qquad
\mathcal{B}^{-}\!:\;\hat{w}_k = s\,(g(c_k) - c_{-}).
\]
Calibration found that the ``optimal'' shift values across most layers are $c_{+} = c_{-} = 0.5$. The normalized grids are given in Appendix~\ref{app:grids}. 


\paragraph{Scale search.} SFP4 is compatible with Four-Over-Six-style scale optimization. For instance, we could test divisors $d \in \{4, 4.5, 5, 5.5, 6\}$ with $s_d = x_{\max}/d$, selecting the divisor minimizing block MSE. Since we only apply SFP4 on the weights, this calibration can be done offline in the case of PTQ. 
Unless otherwise stated, we \textit{do not} apply this scale optimization in our experiments below, in order to isolate grid effects from orthogonal scale optimization effects; experimentally, we have found scale search to improve the MSE of all methods by between 5 and 10\%.


\paragraph{Grid flag.}
The three-way grid selector requires 2 bits per block.
We store both bits in the scale byte: bits~7:6 encode the grid selector ($0 = \mathcal{A}$, $1 = \mathcal{B}^{+}$, $2 = \mathcal{B}^{-}$), and bits~5:0 store the scale magnitude in E3M3 format (reduced from UE4M3). We found experimentally that the loss of accuracy due to the scale format reduction is negligible. 
Thus, the 2-bit flag adds no sideband storage.

\paragraph{Matmul decomposition.}
Both $\mathcal{B}$-grids decode as NVFP4 plus a per-block constant, so a standard matmul $y = Wx$ decomposes as
\[
\abovedisplayskip=6pt
\belowdisplayskip=6pt
y[m,n]
= \underbrace{\textrm{NVFP4\_MMA}(\tilde{W}, X)}_{y_{\mathrm{main}}}
\;-\; c\!\sum_b \sigma[m,b]\cdot s_W[m,b]\cdot X_{\mathrm{sum}}[b,n],
\]
where $\tilde{W}$ are base E2M1 weights, $\sigma[m,b] \in \{0,+1,-1\}$ encodes the grid choice, $s_W$ is the block scale, and $X_{\mathrm{sum}}[b,n] = \sum_{k \in b} x[k,n]$ is the activation block sum.

\section{Experimental Validation}

\subsection{Synthetic Grid Evaluations}

\begin{table}[t]
\centering
\caption{Grid comparison at $g=16$. Top: Monte Carlo MSE ($\times 10^3$, 2M samples) on standardized distributions. Bottom: WikiText-2 KL divergence ($\times 10^{-2}$) on real weight and activation-quantized models. Methods left of the vertical bar are executable on current hardware; methods right of it require per-block grid selection or non-E2M1 grid values. $G_{\mathrm{opt}}$ is the MSE-optimal grid.}
\label{tab:mse_unified}
\resizebox{\textwidth}{!}{
\begin{tabular}{l|ccc|cccccc|c}
\toprule
Dist./Model & INT4 & FP4 & SFP4 & BOF4 & NF4 & IF4 & PO2(NF4) & PO2(Split87) & MPO2 & $G_{\mathrm{opt}}$ \\
\midrule

Student-$t_5$     & 17.6 & 13.8 & \textbf{11.3} & 10.4 & 11.0 & 11.2 & 9.1  & 9.4  & \textbf{8.8} & 10.7 \\
Student-$t_7$     & 13.3 & 11.8  & \textbf{9.6}  & 8.3  & 9.2  & 9.3  & 7.5  & 7.7  & \textbf{7.1} & 8.5 \\
Student-$t_{10}$  & 11.0 & 10.7  & \textbf{8.6}  & 7.1  & 8.1  & 8.1  & 6.5  & 6.7  & \textbf{6.1} & 7.3 \\
Normal    &  7.6 &  8.9  & \textbf{7.0}  & 5.3  & 6.6  & 6.2  & 5.1  & 5.2  & \textbf{4.6} & 5.4 \\
\midrule
Wiki2 KL & 14.29 & 11.43 & \textbf{10.19} & 9.69 & 9.88 & 9.92 & 8.93 & \textbf{8.13} & 12.46 & --- \\
\bottomrule
\end{tabular}}
\end{table}

Table~\ref{tab:mse_unified} provides a detailed comparison of different grids first in terms of MSE on standard distributions (Normal and Student-t), and in terms of output KL-divergence averaged over real models (see Table~\ref{tab:wa-aggregate}). We separate currently-implementable constructions (left panel, INT4 to SFP4) from the general grids that require additional hardware support (right panel, BOF4 to $G_{\mathrm{opt}}$). 

We first observe that SFP4 is the best E2M1-native method, reducing MSE by 11-21\% relative to FP4, and being only 0--12\% worse relative to the two-grid IF4, while staying implementable.  
Experimentally, a single-shift version of SFP4 (2 grids instead of 3, not included) is considerably worse. 
Moving to general grids, observe that IF4 improves over the INT4/FP4 single-grid baselines, and is close to NF4, but loses to the optimal grid $G_{\mathrm{opt}}$ (confirming our analysis), as well as to the Normal prior grid pairs such as 
PO2(NF4) and PO2(Split87). The latter outperforms both the best possible single grid ($G_{\mathrm{opt}}$) and IF4, reducing MSE by 5--15\% and 15--20\% respectively. From KL, the rankings transfer well between the distributional setting and real models, except for MPO2, which seems to overfit model-specific distributions.
This confirms that non-E2M1 grids add substantial representational capacity and that the Normal prior is valid and generalizes better than model-specific fits.\looseness=-1

\subsection{Post-Training Accuracy}
\label{sec:experiments}

\paragraph{Setup.}
We quantize the Llama-3.2-3B-Instruct~\cite{grattafiori2024llama}, Qwen3-\{8B,14B\}~\cite{yang2025qwen3}, Qwen3.5-\{2B, 4B, 9B, 27B\}~\cite{qwen35blog} models, using RTN with absmax scaling and block size $g=16$. Scales are quantized to 8 bits (E4M3). In the majority of experiments,  we do not use Hadamard transforms, in order  to isolate the effect of grid choice (we provide results with Hadamard transforms in the Appendix \ref{app:ptq_results}).
We measure KL divergence against BF16 logits on WikiText-2 and C4, as well as Expected Acceptance Rate (EAR) between the original model and the quantized one~\cite{helcig2026statisticallylosslessquantizationlargelanguage}. We run models on downstream tasks using Harness~\cite{eval-harness} and report accuracies on  Winogrande~\cite{sakaguchi2021winogrande}, ARC-C, ARC-E~\cite{allenai:arc}, Lambada (standard)~\cite{paperno_2016_2630551}, PIQA~\cite{Bisk2020}, Hellaswag (10-shot)~\cite{zellers2019hellaswag}, MMLU~\cite{hendryckstest2021}, IFEval (Prompt) ~\cite{zhou2023instructionfollowingevaluationlargelanguage}, and GSM8K-CoT~\cite{cobbe2021gsm8k}. We compare several single-grids NVFP4, BOF4 \cite{blumenberg2025bof4}, NF4 \cite{dettmers2023qlora}, Split87, and several multi-grid variants IF4 (per-block INT4/FP4 selection~\citep{cook2026adaptiveblockscaled}), PO2(NF4), and PO2(Split87). We also compare with Four-Over-Six~\cite{cook2025fouroversix} and the SFP4 described in Section~\ref{sec:gridflip}. 

\paragraph{Weight-and-Activation PTQ Results.}
Tables~\ref{tab:wa-accuracy-permodel} and~\ref{tab:wa-aggregate} report the W4A4 results. Dual-grid methods consistently reduce KL divergence relative to single-grid NVFP4. Among single-grid variants, NF4 and Split87 snapped to FP8 perform well. PO2(Split87) achieves the lowest KL divergence, highest EAR  for every model (Appendix~\ref{app:full_tables}), as well as the best average accuracy across models, although differences among the top methods on downstream tasks are small. Identity-vs-Hadamard ablations on Qwen3.5-4B are reported in Appendix~\ref{app:ptq_results} (Tables~\ref{tab:qwen354b-wo-transforms} and~\ref{tab:qwen354b-wa-transforms}); in general, Hadamard rotation provides a small improvement in the W4A4 setting, while on W4A16 the results are mixed. Additionally, Figure~\ref{fig:seed_run} reports the seed-to-seed variation for Qwen3.5-4B.

\looseness=-1


\begin{table*}[t]
\centering
\caption{Accuracy of seven models quantized with weight and activation quantization (W4A4). Each per-model cell reports the mean over Winogrande, ARC-C, ARC-E, Lambada (standard), PIQA, Hellaswag (10-shot), MMLU, IFEval (Prompt), and GSM8K-CoT. The \textbf{Avg.\ Recovery} column reports the per-model recovery (quantized score divided by the BF16 baseline) averaged across all seven models. Best non-BF16 result per column is in \textbf{bold}; higher is better.}
\label{tab:wa-accuracy-permodel}
\setlength{\tabcolsep}{4pt}
\resizebox{\textwidth}{!}{%
\begin{tabular}{l ccccccc c}
\toprule
Method & L3.2-3B & Q3-8B & Q3-14B & Q3.5-2B & Q3.5-4B & Q3.5-9B & Q3.5-27B & \textbf{Avg.\ Recovery} \\
\midrule
\textit{BF16 (baseline)} & \textit{67.92} & \textit{74.11} & \textit{77.39} & \textit{61.12} & \textit{73.26} & \textit{75.64} & \textit{80.26} & \textit{100.00\%} \\
\midrule
PO2(Split87) & \textbf{66.39} & 73.15 & 76.58 & 56.97 & 70.71 & \textbf{75.50} & 80.30 & \textbf{97.85\%} \\
PO2(NF4) & 66.10 & \textbf{73.41} & 76.40 & 57.25 & 70.68 & 75.14 & 79.95 & 97.74\% \\
NF4 & 65.74 & 72.91 & 76.45 & 56.69 & 71.11 & 75.07 & 80.35 & 97.59\% \\
Split87 & 65.68 & 72.89 & 76.65 & \textbf{57.45} & 70.48 & 74.76 & 80.16 & 97.57\% \\
IF4 & 65.31 & 72.74 & \textbf{76.80} & 56.71 & 70.76 & 75.04 & \textbf{80.39} & 97.47\% \\
SFP4 & 65.61 & 73.05 & 76.18 & 56.35 & \textbf{71.47} & 74.05 & 80.20 & 97.31\% \\
4over6 & 65.51 & 72.69 & 76.59 & 56.23 & 70.64 & 74.88 & 80.25 & 97.27\% \\
BOF4 & 65.25 & 73.33 & 76.50 & 55.78 & 70.67 & 75.26 & 80.13 & 97.27\% \\
NVFP4 & 65.04 & 72.77 & 75.92 & 56.63 & 69.72 & 74.70 & 80.37 & 96.97\% \\
NVINT4 & 64.00 & 71.82 & 75.89 & 53.20 & 70.31 & 73.86 & 79.99 & 95.65\% \\
\bottomrule
\end{tabular}%
}
\end{table*}

\begin{table*}[t]
\centering
\caption{Average of the benchmarks and KL divergences on weight and activation quantization (W4A4) on seven models, plus the EAR ( Expected Acceptance Rate). KL columns measure the divergence between the quantized and BF16 output distributions on the indicated calibration set ($\downarrow$, lower is better); EAR$_{\text{top10},\,C4}$ is higher-is-better. \emph{Avg} is the mean over Winogrande, ARC-C, ARC-E, Lambada (standard), PIQA, Hellaswag (10-shot), MMLU, IFEval (Prompt), and GSM8K-CoT. \emph{Avg.\ Recovery} is the per-model recovery on \emph{Avg} (quantized $\div$ BF16) averaged across the seven models. Each cell is the mean across 7 models; best non-BF16 result is in \textbf{bold}.}
\label{tab:wa-aggregate}
\setlength{\tabcolsep}{4pt}
\resizebox{\textwidth}{!}{%
\begin{tabular}{l cccc c c}
\toprule
 & \multicolumn{4}{c}{\textbf{Distribution-level}} & \textbf{Benchmarks} & \\
\cmidrule(lr){2-5}
Method & KL$_{\text{top10},\,C4}$ $\downarrow$ & EAR$_{\text{top10},\,C4}$ $\uparrow$ & KL$_{\text{Wiki2}}$ $\downarrow$ & KL$_{\text{C4}}$ $\downarrow$ & Avg $\uparrow$ & \textbf{Avg.\ Recovery} $\uparrow$ \\
\midrule
\textit{BF16 (baseline)} & \textit{0.0000} & \textit{1.0000} & \textit{0.0000} & \textit{0.0000} & \textit{72.81} & \textit{100.00\%} \\
\midrule
PO2(Split87) & \textbf{0.0364} & \textbf{0.9235} & \textbf{0.0813} & \textbf{0.0464} & \textbf{71.37} & \textbf{97.85\%} \\
PO2(NF4) & 0.0403 & 0.9196 & 0.0893 & 0.0514 & 71.27 & 97.74\% \\
NF4 & 0.0460 & 0.9141 & 0.0988 & 0.0589 & 71.19 & 97.59\% \\
Split87 & 0.0413 & 0.9186 & 0.0897 & 0.0524 & 71.15 & 97.57\% \\
IF4 & 0.0456 & 0.9142 & 0.0992 & 0.0585 & 71.11 & 97.47\% \\
SFP4 & 0.0488 & 0.9118 & 0.1019 & 0.0621 & 70.99 & 97.31\% \\
4over6 & 0.0491 & 0.9116 & 0.1023 & 0.0628 & 70.97 & 97.27\% \\
BOF4 & 0.0446 & 0.9155 & 0.0969 & 0.0568 & 70.99 & 97.27\% \\
NVFP4 & 0.0543 & 0.9067 & 0.1143 & 0.0696 & 70.74 & 96.97\% \\
NVINT4 & 0.0702 & 0.8925 & 0.1429 & 0.0919 & 69.87 & 95.65\% \\
\bottomrule
\end{tabular}%
}
\end{table*}



\subsection{Pretraining with Multiple Grids}
\label{sec:pretraining}

We evaluate multi-grid quantization for pretraining using the small-scale protocol from QuEST~\citep{panferov2025queststabletrainingllms}: Llama-like~\citep{touvron2023llamaopenefficientfoundation} transformer models with 30M, 50M and 100M parameters trained on C4 with quantized weights and activations via straight-through gradients.
Figure~\ref{fig:pretraining} reports the C4 validation loss gap relative to BF16 across several data/parameter ratios. Full hyper-parameters and weight-only QAT measurements are presented in Appendix~\ref{app:qat}.
The best single-grid method (BOF4) consistently outperforms some double-grid methods (IF4) but loses to the best double-grid method (PO2(Split87)).

\begin{figure}[t]
\centering
\includegraphics[width=\textwidth]{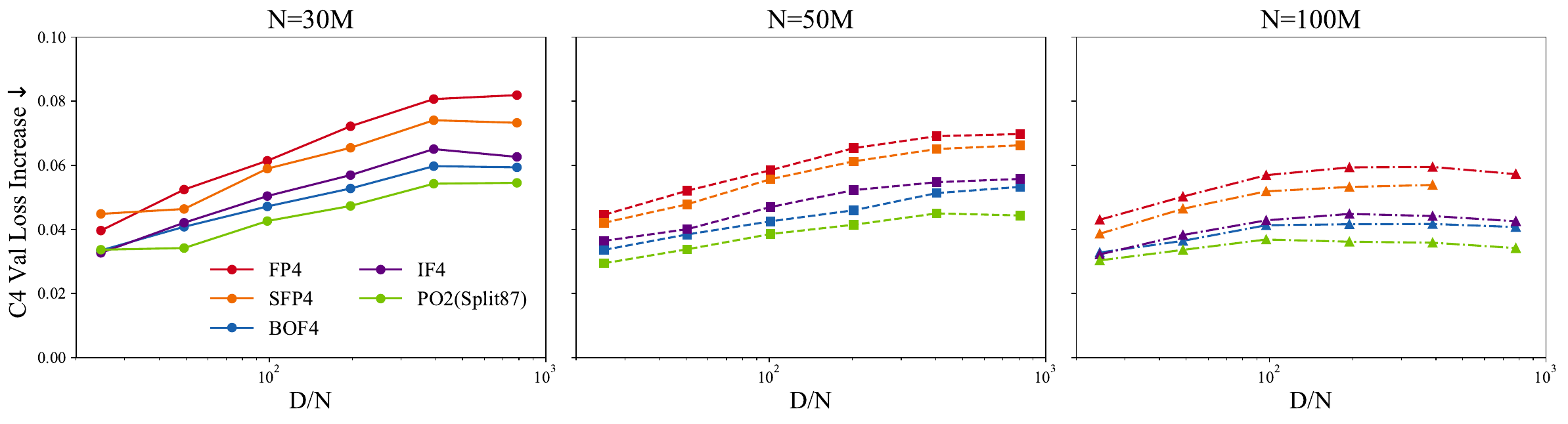}
\caption{Final C4 validation loss gap relative to BF16 for quantized pre-training of Llama-like models with $N$ parameters with $D/N$ tokens-per-parameter.}
\label{fig:pretraining}
\end{figure}

\subsection{SFP4: Quality and Kernel Speed}

\paragraph{Quantization quality.}
Table~\ref{tab:gridflip_quality} compares SFP4 variants against single-grid baselines in weight-only KL divergence.
The two-shift variant with Four Over Six scale search (SFP4-2c+4/6) improves Qwen3.5-0.8B KL by 23\% over NVFP4 and outperforms IF4 among E2M1-compatible methods.
The one-shift variant (SFP4-1c+4/6) improves over IF4 on Qwen3.5-0.8B, but is weaker on Qwen3-8B.
PO2(NF4) remains better across all models, but has no execution path on current hardware.\looseness=-1

  \begin{table}[t]\vspace{-1em}
  \centering
  \caption{Shifted NVFP4 (SFP4) weight-only WikiText-2 KL divergence (seq=4096, full eval); 1c uses one shift, 2c two.}
  \label{tab:gridflip_quality}
  \begin{tabular}{lccc}
  \toprule
  Method & Qwen3.5-0.8B & Qwen3.5-4B & Qwen3-8B \\
  \midrule
  NVFP4          & 0.0905 & 0.0697 & 0.0547 \\
  IF4            & 0.0771 & 0.0602 & 0.0470 \\
  Four Over Six  & 0.0811 & 0.0651 & 0.0483 \\
  SFP4-1c        & 0.0828 & 0.0681 & 0.0557 \\
  SFP4-2c        & 0.0823 & 0.0610 & 0.0547 \\
  SFP4-1c+4/6    & 0.0713 & 0.0655 & 0.0515 \\
  SFP4-2c+4/6    & 0.0701 & 0.0619 & 0.0527 \\
  PO2(NF4)       & 0.0604 & 0.0568 & 0.0383 \\
  \bottomrule
  \end{tabular}\vspace{-1em}
  \end{table}


\paragraph{Implementation and overhead.}
We provide an implementation using two GEMMs: the main GEMM is unmodified NVFP4, and a BF16 correction GEMM contracts a signed correction matrix $\sigma[m,b] \cdot s_W[m,b]$ (shape $M \times K/g$, stored in BF16) with activation block sums, as detailed in Appendix~\ref{sec:sfp4-kernels}
For sufficiently large shapes ($M=4096, N=5120,  K=\num{27648}$), the overhead of SFP4 can be limited to about 50\% slower than the NVFP4 path, see Table \ref{tab:sfp4-benchmark} in the appendix.

\section{Discussion and Conclusions}
As demonstrated above, the standard (NV)FP4 quantization grid is suboptimal in several ways. For one, it wastes two representations on zero, and other static grids of 16 levels, such as NF4, can achieve better results.
Second, an entire bit in the microscale is completely unused, and could carry additional information. This could be a sign in case of asymmetric base grids (BOF4), or an indicator for a shift; in its most general form, it just indicates the selection of one out of two possible quantization grids used for that particular group. Two grids are generally better than one, with differences most pronounced at small group sizes, which are not governed by the Central Limit Theorem. In particular, it appears that the ability to specialize grids based on whether they represent \textquote{spiky} or \textquote{flat} groups is generally beneficial. However, which construction to choose for these two grids is far less clear: While there is a significant improvement when going from standard FP4 to any of the two-grid representations proposed here, the differences between PO2 grids themselves are often close to noise level, and their ranking can change from model to model.
As such, it seems premature to ask hardware vendors to implement one specific format, such as IF4, directly in silicon. Instead, we believe it would be more beneficial if future hardware provides configurable lookup-tables directly in the datapath. As the number of entries is low ($2 \times 16$), and each entry in itself can be small, (we snapped all grids to E4M3), such lookup tables would not take a huge chip-space budget and they could be pipelined. And if the table entries are small enough,
hardware bottlenecks shift towards the accumulators \citep{blumenfeld2024towards}. Thus, we expect throughput could be maintained at moderate hardware cost.\footnote{Note that, e.g., on AMD MI355X, FP4 and FP6 have the same throughput}

Do we actually need hardware support? In the memory-bound regime, i.e., in low batch-size decoding, table lookup can be implemented in software~\citep{guo2024fast} and arithmetic is handled in higher precision at negligible cost. We provide such a kernel for the PO2-setup in Appendix~\ref{sec:po2-kernels}. In the compute-bound case, that is, during prefill or training, speed is dictated by the raw arithmetic throughput of tensor units at the decoded precision. Even if software decoding could be perfectly overlapped with tensor compute, the only gains would be in global memory bandwidth and cache consumption; transfer from registers into tensor cores use the wide representation.
With SFP4, we provide a compromise, by having highly structured grids that allow the main computation to be performed in 4 bit, while the cost of the correction kernel is low enough that the total compute budget is still lower than doing FP8 arithmetic.\looseness=-1

Finally, one might ask whether, if we request enhanced hardware support, the ask shouldn't be for even stronger formats such as QTIP\citep{tseng2024qtip}. However, such formats excel in the weight-only setting, where expensive compression algorithms can be run offline, but do not support efficient implementations once activations need to be quantized on the fly. In contrast, the quantization procedure for PO2 is straightforward, making it applicable to weight+activation quantization as well.

All in all, we have shown the power of multiple grids for 4-bit quantization. While we provide several successful grid pairs, we believe there is a huge space for further optimization, especially if one starts to pick grid-pairs optimized for individual tensors.
The fixed-function graphics pipeline gave way to programmable shaders in the 2000s, unlocking compute uses that chip designers could never have anticipated. Perhaps it is time for a similar evolution in low-bit matrix multiplication.

\paragraph{Acknowledgments.} 
This research was funded in part by the Austrian Science Fund (FWF) 10.55776/COE12, i.e., the
Bilateral AI Cluster of Excellence, and through generous research support by NVIDIA and Google. This work
was supported under project ID 40 as part of the Swiss AI Initiative, through a grant from the ETH
Domain and computational resources provided by the Swiss National Supercomputing Centre (CSCS)
under the Alps infrastructure. 
We would like to thank Tijmen
Blankevoort (NVIDIA), for useful discussions. Additionally, we would like
to thank Datacrunch/Verda, in particular Paul Chang, for hardware support that was essential to this project. We would also like
to thank Roberto L. Castro for help in reviewing our kernels. 

\bibliographystyle{plainnat}
\bibliography{references}

@article{nagel2021white,
  title={A White Paper on Neural Network Quantization},
  author={Nagel, Markus and Amjad, Rana Ali and van Baalen, Mart and Louizos, Christos and Blankevoort, Tijmen},
  journal={arXiv preprint arXiv:2106.08295},
  year={2021}
}

@inproceedings{frantar2023gptq,
  title={{GPTQ}: Accurate Post-Training Quantization for Generative Pre-Trained Transformers},
  author={Frantar, Elias and Ashkboos, Saleh and Hoefler, Torsten and Alistarh, Dan},
  booktitle={International Conference on Learning Representations},
  year={2023}
}

@article{sakaguchi2021winogrande,
  author = {Sakaguchi, Keisuke and Bras, Ronan Le and Bhagavatula, Chandra and Choi, Yejin},
  journal = {Communications of the ACM},
  number = {9},
  pages = {99--106},
  publisher = {ACM New York, NY, USA},
  title = {Winogrande: An adversarial winograd schema challenge at scale},
  volume = {64},
  year = {2021},
}

@dataset{paperno_2016_2630551,
  author       = {Paperno, Denis and
                  Kruszewski, Germán and
                  Lazaridou, Angeliki and
                  Pham, Quan Ngoc and
                  Bernardi, Raffaella and
                  Pezzelle, Sandro and
                  Baroni, Marco and
                  Boleda, Gemma and
                  Fernández, Raquel},
  title        = {The LAMBADA dataset},
  month        = aug,
  year         = 2016,
  publisher    = {Zenodo},
  doi          = {10.5281/zenodo.2630551},
  url          = {https://doi.org/10.5281/zenodo.2630551},
}

@inproceedings{Bisk2020,
  author = {Yonatan Bisk and Rowan Zellers and
            Ronan Le Bras and Jianfeng Gao
            and Yejin Choi},
  title = {PIQA: Reasoning about Physical Commonsense in
           Natural Language},
  booktitle = {Thirty-Fourth AAAI Conference on
               Artificial Intelligence},
  year = {2020},
}

@inproceedings{zellers2019hellaswag,
  title={Hellaswag: Can a machine really finish your sentence?},
  author={Zellers, Rowan and Holtzman, Ari and Bisk, Yonatan and Farhadi, Ali and Choi, Yejin},
  booktitle={Proceedings of the 57th annual meeting of the association for computational linguistics},
  pages={4791--4800},
  year={2019}
}

@article{grattafiori2024llama,
  title={The llama 3 herd of models},
  author={Grattafiori, Aaron and Dubey, Abhimanyu and Jauhri, Abhinav and Pandey, Abhinav and Kadian, Abhishek and Al-Dahle, Ahmad and Letman, Aiesha and Mathur, Akhil and Schelten, Alan and Vaughan, Alex and others},
  journal={arXiv preprint arXiv:2407.21783},
  year={2024}
}

@article{yang2025qwen3,
  title={Qwen3 technical report},
  author={Yang, An and Li, Anfeng and Yang, Baosong and Zhang, Beichen and Hui, Binyuan and Zheng, Bo and Yu, Bowen and Gao, Chang and Huang, Chengen and Lv, Chenxu and others},
  journal={arXiv preprint arXiv:2505.09388},
  year={2025}
}

@misc{qwen35blog,
    title = {Qwen3.5: Accelerating Productivity with Native Multimodal Agents},
    url = {https://qwen.ai/blog?id=qwen3.5},
    author = {Qwen Team},
    month = {February},
    year = {2026}
}

@article{cobbe2021gsm8k,
  title={Training Verifiers to Solve Math Word Problems},
  author={Cobbe, Karl and Kosaraju, Vineet and Bavarian, Mohammad and Chen, Mark and Jun, Heewoo and Kaiser, Lukasz and Plappert, Matthias and Tworek, Jerry and Hilton, Jacob and Nakano, Reiichiro and Hesse, Christopher and Schulman, John},
  journal={arXiv preprint arXiv:2110.14168},
  year={2021}
}

@misc{zhou2023instructionfollowingevaluationlargelanguage,
      title={Instruction-Following Evaluation for Large Language Models}, 
      author={Jeffrey Zhou and Tianjian Lu and Swaroop Mishra and Siddhartha Brahma and Sujoy Basu and Yi Luan and Denny Zhou and Le Hou},
      year={2023},
      eprint={2311.07911},
      archivePrefix={arXiv},
      primaryClass={cs.CL},
      url={https://arxiv.org/abs/2311.07911}, 
}

@article{hendryckstest2021,
  title={Measuring Massive Multitask Language Understanding},
  author={Dan Hendrycks and Collin Burns and Steven Basart and Andy Zou and Mantas Mazeika and Dawn Song and Jacob Steinhardt},
  journal={Proceedings of the International Conference on Learning Representations (ICLR)},
  year={2021}
}

@article{allenai:arc,
      author    = {Peter Clark  and Isaac Cowhey and Oren Etzioni and Tushar Khot and
                    Ashish Sabharwal and Carissa Schoenick and Oyvind Tafjord},
      title     = {Think you have Solved Question Answering? Try ARC, the AI2 Reasoning Challenge},
      journal   = {arXiv:1803.05457v1},
      year      = {2018},
}

@misc{eval-harness,
  author       = {Gao, Leo and Tow, Jonathan and Abbasi, Baber and Biderman, Stella and Black, Sid and DiPofi, Anthony and Foster, Charles and Golding, Laurence and Hsu, Jeffrey and Le Noac'h, Alain and Li, Haonan and McDonell, Kyle and Muennighoff, Niklas and Ociepa, Chris and Phang, Jason and Reynolds, Laria and Schoelkopf, Hailey and Skowron, Aviya and Sutawika, Lintang and Tang, Eric and Thite, Anish and Wang, Ben and Wang, Kevin and Zou, Andy},
  title        = {The Language Model Evaluation Harness},
  month        = 07,
  year         = 2024,
  publisher    = {Zenodo},
  version      = {v0.4.3},
  doi          = {10.5281/zenodo.12608602},
  url          = {https://zenodo.org/records/12608602}
}

@misc{helcig2026statisticallylosslessquantizationlargelanguage,
      title={Statistically-Lossless Quantization of Large Language Models}, 
      author={Michael Helcig and Eldar Kurtic and Dan Alistarh},
      year={2026},
      eprint={2605.02404},
      archivePrefix={arXiv},
      primaryClass={cs.LG},
      url={https://arxiv.org/abs/2605.02404}, 
}

@inproceedings{lin2024awq,
  title={{AWQ}: Activation-Aware Weight Quantization for {LLM} Compression and Acceleration},
  author={Lin, Ji and Tang, Jiaming and Tang, Haotian and Yang, Shang and Dang, Xingyu and Gan, Chuang and Han, Song},
  booktitle={Proceedings of Machine Learning and Systems},
  year={2024}
}

@misc{blumenberg2025bof4,
  title         = {{Improving Block-Wise LLM Quantization by 4-bit Block-Wise Optimal Float (BOF4): Analysis and Variations}},
  author        = {Blumenberg, Patrick and Graave, Thomas and Fingscheidt, Tim},
  year          = {2025},
  month         = may,
  eprint        = {2505.06653},
  archivePrefix = {arXiv},
  primaryClass  = {cs.LG},
  doi           = {10.48550/arXiv.2505.06653},
  url           = {https://arxiv.org/abs/2505.06653}
}

@inproceedings{dettmers2023qlora,
  title={{QLoRA}: Efficient Finetuning of Quantized {LLMs}},
  author={Dettmers, Tim and Pagnoni, Artidoro and Holtzman, Ari and Zettlemoyer, Luke},
  booktitle={Advances in Neural Information Processing Systems},
  year={2023}
}

@article{rouhani2023microscaling,
  title={Microscaling Data Formats for Deep Learning},
  author={Rouhani, Bita Darvish and Zhao, Ritchie and More, Ankit and Hall, Mathew and Khodamoradi, Alireza and Deng, Summer and Choudhary, Dhruv and Cornea, Marius and Dellinger, Eric and Denolf, Kristof and Dusan, Stosic and Elango, Venmugil and Golub, Maximilian and Heinecke, Alexander and James-Roxby, Phil and Jani, Dharmesh and Kolhe, Gaurav and Langhammer, Martin and Li, Ada and Melnick, Levi and Mesmakhosroshahi, Maral and Rodriguez, Andres and Schulte, Michael and Shafipour, Rasoul and Shao, Lei and Siu, Michael and Dubey, Pradeep and Micikevicius, Paulius and Naumov, Maxim and Verrilli, Colin and Wittig, Ralph and Burger, Doug and Chung, Eric},
  journal={arXiv preprint arXiv:2310.10537},
  year={2023}
}

@article{khailany2025fp4,
  title={4-bit Floating Point Quantization for Inference},
  author={Khailany, B. and others},
  journal={NVIDIA Technical Report},
  year={2025}
}

@misc{chmiel2025fp4wayfullyquantized,
      title={FP4 All the Way: Fully Quantized Training of LLMs}, 
      author={Brian Chmiel and Maxim Fishman and Ron Banner and Daniel Soudry},
      year={2025},
      eprint={2505.19115},
      archivePrefix={arXiv},
      primaryClass={cs.LG},
      url={https://arxiv.org/abs/2505.19115}, 
}

@article{castro2025quartet,
  title={{Quartet}: Native {FP4} Training Can Be Optimal for Large Language Models},
  author={Castro, Roberto L. and Panferov, Andrei and Tabesh, Soroush and Sieberling, Oliver and Chen, Jiale and Nikdan, Mahdi and Ashkboos, Saleh and Alistarh, Dan},
  journal={arXiv preprint arXiv:2505.14669},
  year={2025}
}

@article{panferov2026quartetii,
  title={{Quartet II}: Accurate {LLM} Pre-Training in {NVFP4} by Improved Unbiased Gradient Estimation},
  author={Panferov, Andrei and Schultheis, Erik and Tabesh, Soroush and Alistarh, Dan},
  journal={arXiv preprint arXiv:2601.22813},
  year={2026}
}

@article{cook2026adaptiveblockscaled,
  title={Adaptive Block-Scaled Data Types},
  author={Cook, Jack and Lee, Hyemin S. and Le, Kathryn and Guo, Junxian and Traverso, Giovanni and Chandrakasan, Anantha P. and Han, Song},
  journal={arXiv preprint arXiv:2603.28765},
  year={2026}
}

@article{cook2025fouroversix,
  title={Four Over Six: More Accurate {NVFP4} Quantization with Adaptive Block Scaling},
  author={Cook, Jack and Guo, Junxian and Xiao, Guangxuan and Lin, Yujun and Han, Song},
  journal={arXiv preprint arXiv:2512.02010},
  year={2025}
}

@article{lloyd1982least,
  title={Least Squares Quantization in {PCM}},
  author={Lloyd, Stuart P.},
  journal={IEEE Transactions on Information Theory},
  volume={28},
  number={2},
  pages={129--137},
  year={1982}
}

@article{max1960quantizing,
  title={Quantizing for Minimum Distortion},
  author={Max, Joel},
  journal={IRE Transactions on Information Theory},
  volume={6},
  number={1},
  pages={7--12},
  year={1960}
}

@article{
elangovan2025lobcq,
title={{LO}-{BCQ}: Locally Optimal Block Clustered Quantization for 4-bit (W4A4) {LLM} Inference},
author={Reena Elangovan and Charbel Sakr and Anand Raghunathan and Brucek Khailany},
journal={Transactions on Machine Learning Research},
issn={2835-8856},
year={2025},
url={https://openreview.net/forum?id=loWISTqGwW},
note={Featured Certification, J2C Certification}
}

@article{martin2021implicit,
  title={Implicit Self-Regularization in Deep Neural Networks: Evidence from Random Matrix Theory and Implications for Training},
  author={Martin, Charles H. and Mahoney, Michael W.},
  journal={Journal of Machine Learning Research},
  volume={22},
  number={165},
  pages={1--73},
  year={2021}
}

@article{hodgkinson2021multiplicative,
  title={Multiplicative Noise and Heavy Tails in Stochastic Optimization},
  author={Hodgkinson, Liam and Mahoney, Michael W.},
  journal={International Conference on Machine Learning},
  year={2021}
}

@inproceedings{ashkboos2024quarot,
  title={{QuaRot}: Outlier-Free 4-Bit Inference in Rotated {LLMs}},
  author={Ashkboos, Saleh and Mohtashami, Amirkeivan and Croci, Maximilian L. and Li, Bo and Cameron, Pashmina and Jaggi, Martin and Alistarh, Dan and Hoefler, Torsten and Hensman, James},
  booktitle={Advances in Neural Information Processing Systems},
  year={2024}
}

@inproceedings{chee2024quip,
  title={{QuIP}\#: Even Better {LLM} Quantization with Hadamard Incoherence and Lattice Codebooks},
  author={Tseng, Albert and Chee, Jerry and Sun, Qingyao and Kuleshov, Volodymyr and De Sa, Christopher},
  booktitle={Proceedings of the 41st International Conference on Machine Learning},
  pages={48630--48656},
  year={2024}
}

@inproceedings{kim2024squeezellm,
  title={{SqueezeLLM}: Dense-and-Sparse Quantization},
  author={Kim, Sehoon and Hooper, Coleman and Gholami, Amir and Dong, Zhen and Li, Xiuyu and Shen, Sheng and Mahoney, Michael W. and Keutzer, Kurt},
  booktitle={International Conference on Machine Learning},
  year={2024}
}

@inproceedings{malinovskii2025higgs,
  title={{HIGGS}: Pushing the Limits of Large Language Model Quantization via the Linearity Theorem},
  author={Malinovskii, Vladimir and Panferov, Andrei and Ilin, Ivan and Guo, Han and Richt{\'a}rik, Peter and Alistarh, Dan},
  booktitle={Proceedings of the 2025 Conference of the Nations of the Americas Chapter of the Association for Computational Linguistics: Human Language Technologies},
  pages={10857--10886},
  year={2025}
}

@misc{touvron2023llamaopenefficientfoundation,
      title={LLaMA: Open and Efficient Foundation Language Models}, 
      author={Hugo Touvron and Thibaut Lavril and Gautier Izacard and Xavier Martinet and Marie-Anne Lachaux and Timothée Lacroix and Baptiste Rozière and Naman Goyal and Eric Hambro and Faisal Azhar and Aurelien Rodriguez and Armand Joulin and Edouard Grave and Guillaume Lample},
      year={2023},
      eprint={2302.13971},
      archivePrefix={arXiv},
      primaryClass={cs.CL},
      url={https://arxiv.org/abs/2302.13971}, 
}

@misc{panferov2025queststabletrainingllms,
      title={QuEST: Stable Training of LLMs with 1-Bit Weights and Activations}, 
      author={Andrei Panferov and Jiale Chen and Soroush Tabesh and Roberto L. Castro and Mahdi Nikdan and Dan Alistarh},
      year={2025},
      eprint={2502.05003},
      archivePrefix={arXiv},
      primaryClass={cs.LG},
      url={https://arxiv.org/abs/2502.05003}, 
}

@inproceedings{blumenfeld2024towards,
title={Towards Cheaper Inference in Deep Networks with Lower Bit-Width Accumulators},
author={Yaniv Blumenfeld and Itay Hubara and Daniel Soudry},
booktitle={The Twelfth International Conference on Learning Representations},
year={2024},
url={https://openreview.net/forum?id=oOwDQl8haC}
}

@inproceedings{guo2024fast,
  title={Fast matrix multiplications for lookup table-quantized llms},
  author={Guo, Han and Brandon, William and Cholakov, Radostin and Ragan-Kelley, Jonathan and Xing, Eric and Kim, Yoon},
  booktitle={Findings of the Association for Computational Linguistics: EMNLP 2024},
  pages={12419--12433},
  year={2024}
}

@article{tseng2024qtip,
  title={Qtip: Quantization with trellises and incoherence processing},
  author={Tseng, Albert and Sun, Qingyao and Hou, David and De, Christopher},
  journal={Advances in Neural Information Processing Systems},
  volume={37},
  pages={59597--59620},
  year={2024}
}

@article{tseng2025training,
  title={Training llms with mxfp4},
  author={Tseng, Albert and Yu, Tao and Park, Youngsuk},
  journal={arXiv preprint arXiv:2502.20586},
  year={2025}
}

\appendix

\section{Limitations and Broader Impacts}
\label{sec:limitations}

The theory uses i.i.d.\ block models and Student-$t$ fits, while real transformer tensors have structure from channels, heads, experts, and training dynamics.
The empirical agreement suggests the model is useful, but it should not be read as a full generative model of weights or activations.
The post-training experiments cover several model families but still emphasize Qwen-family evaluations, and the pretraining runs are small compared with frontier LLM training.
SFP4 is hardware-faithful but not yet a fully fused production kernel; the current wrapper path has large overhead, and the cleanest one-bit variant is weaker than non-E2M1 learned grids.

The broader impact is primarily efficiency.
Better 4-bit quantization can reduce the cost and energy of LLM training and serving, which may broaden access to high-quality models.
It may also reduce the cost of harmful deployments of generative models.
This work does not release a new pretrained model or dataset; the main artifact is a quantization method and empirical analysis.

\section{Complete Proofs}
\label{app:proofs}

\subsection{Problem Setup}

We now re-iterate the complete argument. Let $g$ be the quantization group size.  A block is denoted
$X=(X_1,\ldots,X_g)\in\R^g$.  Unless stated otherwise, the coordinates in a
block are i.i.d. from a distribution with finite second moment; the
bounded-support assumption needed for Theorem~\ref{thm:no-asymptotic-advantage}
is stated separately below.  We write
\[
        Z_i := |X_i|,\qquad
        M := \max_{1\le i\le g} Z_i .
\]
When $M>0$, define the normalized magnitudes
\[
        A_i := Z_i/M \in [0,1].
\]
When $M=0$, the block is identically zero; in that case, we use the
zero-block convention $A_i=0$ and quantization loss equal to zero. (This
 avoids imposing the assumption $\mathbb{P}[X_i=0]=0$.)

For notational simplicity, we first focus on the following parameterization of the positive half of an eight-point signed codebook by
\[
        B=(b_0,\ldots,b_7),\qquad 0=b_0\le b_1\le\cdots\le b_6\le b_7=1,
\]
and denote the compact grid space by
\[
        \Cbar := \{B:0=b_0\le b_1\le\cdots\le b_6\le b_7=1\}.
\]
The strict class used by actual eight-level grids is
\[
        \Cstrict := \{B:0=b_0<b_1<\cdots<b_6<b_7=1\}.
\]
For the argument below, one may use either $\Cbar$ or $\Cstrict$:
$\Cstrict$ is dense in $\Cbar$, and the risks are continuous in the sorted grid
coordinates. 
For $B\in\Cbar$, let $q_B(a)$ be a closest element of $B$ to $a\in[0,1]$;
ties are broken randomly.  For $M>0$, define the signed
absmax quantizer
\[
        Q_B(x;M) := \operatorname{sgn}(x)\,M\,q_B(|x|/M).
\]
On the zero block $M=0$, we set by definition $Q_B(0;0)=0$.  Define
\[
        \psi_B(a) := \min_{0\le j\le 7} (a-b_j)^2,
        \qquad a\in[0,1].
\]
The block loss is
\[
        L_B^{(g)}(X)
        :=\frac1g\sum_{i=1}^g\bigl(X_i-Q_B(X_i;M)\bigr)^2.
\]
Equivalently, with the zero-block convention,
\[
        L_B^{(g)}(X)
        =M^2\frac1g\sum_{i=1}^g \psi_B(A_i).            \tag{1}\label{eq:scalar-reduction}
\]
Thus, for i.i.d. coordinates in a block, exchangeability gives
\[
        R_g(B) := \E L_B^{(g)}(X)
        = \E\!\left[M^2\psi_B(A_1)\right].
\]
The best single-grid risk and the adaptive two-grid risk are
\[
        R_g^\star := \inf_{B\in\Cbar} R_g(B),
        \qquad
        R_{\mathrm{PO2},g}^\star
        := \inf_{B_1,B_2\in\Cbar}
           \E\!\left[\min\{L_{B_1}^{(g)}(X),L_{B_2}^{(g)}(X)\}\right].
\]
\subsection{Theoretical Analysis}

The general multi-grid objective is the following.  Given blocks
$X^{(1)},\ldots,X^{(n)}\in\R^g$, a codebook class $\mathcal{A}$, and
$k\ge1$ grids $B_1,\ldots,B_k\in\mathcal{A}$, choose the grids to minimize
\[
        \frac1n\sum_{r=1}^n \min_{1\le j\le k}
        L_{B_j}^{(g)}(X^{(r)}).
\]
For the two-grid setting studied here, $k=2$ and
$\mathcal{A}$ is either $\Cbar$ or the dense strict subclass $\Cstrict$.

The next result formalizes the fact that absmax-normalized blocks have no
asymptotic benefit from per-block selection among two fixed grids once the block
\newtheorem*{theorem*}{Theorem}
\begin{theorem*}[No asymptotic advantage of two grids]\label{thm:no-asymptotic-advantage}
Let $X_1,X_2,\ldots$ be i.i.d. real-valued random variables, and set
$Z_i=|X_i|$.  Assume $\mathbb{P}(Z_1\le x_{\max})=1$ and, for every $\eps>0$, $\mathbb{P}(Z_1>x_{\max}-\eps)>0.$
Then
\[
        \lim_{g\to\infty}\bigl(R_g^\star-R_{\mathrm{PO2},g}^\star\bigr)=0.
\]

\end{theorem*}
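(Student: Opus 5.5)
We will sandwich the difference. Trivially $R_{\mathrm{PO2},g}^\star\le R_g^\star$, since taking $B_1=B_2$ recovers any single grid. Hence it suffices to show
\[
\limsup_{g\to\infty}\bigl(R_g^\star-R_{\mathrm{PO2},g}^\star\bigr)\le 0 .
\]
The case $x_{\max}=0$ is trivial, because every block is zero a.s. We therefore assume $x_{\max}>0$.

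Define the limiting risk
\[
\ell_\infty(B):=x_{\max}^2\,\E\psi_B(Z_1/x_{\max})
\]
and $\ell^\star:=\inf_{B\in\Cbar}\ell_\infty(B)$. The goal is to prove that
\[
\E\Bigl[\sup_{B\in\Cbar}\bigl|L_B^{(g)}(X)-\ell_\infty(B)\bigr|\Bigr]\to 0 .
\]
Granting this, both conclusions follow. First, $R_g^\star\le \inf_B R_g(B)\le \ell^\star+o(1)$. Second, for any $B_1,B_2$,
\[
\E\min\{L_{B_1},L_{B_2}\}\ge \min\{\ell_\infty(B_1),\ell_\infty(B_2)\}-\E\sup_B|L_B^{(g)}-\ell_\infty(B)|\ge \ell^\star-o(1).
\]
Together these give $R_g^\star-R_{\mathrm{PO2},g}^\star\le o(1)$.

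To prove the uniform convergence, we decompose
\[
L_B^{(g)}-\ell_\infty(B)=(M_g^2-x_{\max}^2)\,\tfrac1g\sum_i\psi_B(A_i)+x_{\max}^2\Bigl(\tfrac1g\sum_i\psi_B(A_i)-\tfrac1g\sum_i\psi_B(Z_i/x_{\max})\Bigr)+x_{\max}^2\Bigl(\tfrac1g\sum_i\psi_B(Z_i/x_{\max})-\E\psi_B(Z_1/x_{\max})\Bigr).
\]
Each term is handled separately.

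\begin{itemize}
\item \emph{First term.} Since $0\le\psi_B\le1$, this term is bounded by $|M_g^2-x_{\max}^2|\le x_{\max}^2$. The hypothesis $\Prob(Z_1>x_{\max}-\eps)>0$ gives $\Prob(M_g\le x_{\max}-\eps)=(1-p_\eps)^g\to0$, so $M_g\to x_{\max}$ a.s. Bounded convergence then sends the expectation to zero.
\item \emph{Second term.} The map $a\mapsto\psi_B(a)=\min_j(a-b_j)^2$ is $2$-Lipschitz on $[0,1]$, uniformly in $B$. The term is therefore at most $2\,|Z_i/M_g-Z_i/x_{\max}|\le 2\,|x_{\max}/M_g-1|$, using $Z_i\le M_g$. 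The first-term argument shows this also vanishes in expectation; on $\{M_g=0\}$ we use the crude bound and the fact that this event has vanishing probability.
\item \emph{Third term.} This requires a uniform law of large numbers over the compact class $\Cbar\subset[0,1]^8$. The map $B\mapsto\psi_B(a)$ is $2$-Lipschitz in $\|B\|_\infty$, uniformly in $a$, because $|(a-b)^2-(a-b')^2|\le2|b-b'|$. Take a finite $\delta$-net $\{B^{(1)},\dots,B^{(N)}\}$ of $\Cbar$. The supremum over $\Cbar$ is then at most the maximum over the net plus $4\delta$. At each net point the ordinary LLN for bounded variables, or simply the variance bound $\le 1/g$, gives $L^1$ convergence. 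The maximum of finitely many terms also converges, and letting $\delta\to0$ finishes the step.
\end{itemize}

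The main obstacle is the third term. It is the only place where uniformity over grids is genuinely needed. Uniformity is essential because the two-grid infimum is taken after the expectation, over grids that may depend on $g$. The net argument above, via equi-Lipschitzness and compactness of $\Cbar$, is the route I would take.

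A second point of care is that $A_i$ uses the random normalizer $M_g$, which is itself one of the block maxima. This is why the comparison to the deterministic normalization $x_{\max}$ is routed through the second term rather than applying the LLN directly to $\psi_B(A_i)$, which are not i.i.d.

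Finally, the infimum in $R_g^\star$ is attained, since $R_g$ is continuous on compact $\Cbar$. This is not needed for the argument, since only $\inf$ is used.
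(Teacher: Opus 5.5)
Your proposal is correct and follows essentially the paper's argument: you prove $\E\sup_{B}|L_B^{(g)}-\ell_\infty(B)|\to0$ using a finite net of the compact grid space, the Lipschitz bounds on $\psi_B$, and a comparison of the random normalizer $M_g$ with $x_{\max}$, and then note that the single- and two-grid risks share the limit $\inf_B\ell_\infty(B)$. The only differences are cosmetic: you get $L^1$ convergence at net points from a variance bound and close with a one-sided sandwich, whereas the paper uses the strong law plus dominated convergence and checks $W_\infty=V_\infty$ directly. One small slip in your second term: $Z_i\le M_g$ actually gives $|Z_i/M_g-Z_i/x_{\max}|\le 1-M_g/x_{\max}$, which is bounded, while your bound $|x_{\max}/M_g-1|$ can have infinite expectation even off $\{M_g=0\}$ when $Z_1$ has enough mass near $0$, so either use the sharper bound or cap by the crude bound $x_{\max}^2$ everywhere before invoking bounded convergence.
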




\begin{proof}
Let $Z_i=|X_i|$ and $M_g=\max_{i\le g}Z_i$.  The definition ensures that $x_{\max}$ is the essential supremum, formally 
$x_{\max}=\esssup Z_1$. This implies that we have $Z_i\le x_{\max}$ almost surely for every $i$. If \(x_{\max}=0\), then \(Z_i=0\) a.s. for every \(i\). Hence all blocks are zero blocks, and by the zero-block convention both the single-grid and two-grid risks are zero for every \(g\). The conclusion is therefore immediate. We therefore consider the interesting case \(x_{\max}>0\).
Moreover, for every $\delta>0$,
\[
        p_\delta:=\mathbb{P}(Z_1>x_{\max}-\delta)>0.
\]
Thus
\[
        \mathbb{P}(M_g\le x_{\max}-\delta)
        =(1-p_\delta)^g\to0.
\]
As $M_g$ is nondecreasing in $g$, it follows that
\[
        M_g\to x_{\max}\qquad\text{a.s.}                    \tag{2}\label{eq:max-conv}
\]

Set
\[
        Y_i:=Z_i/x_{\max}\in[0,1].
\]
For $B\in\Cbar$, write
\[
        d_B(a):=\min_j|a-b_j|,
        \qquad
        \psi_B(a)=d_B(a)^2.
\]
Since $0\le d_B\le1$ and $d_B$ is $1$-Lipschitz,
\[
        |\psi_B(a)-\psi_B(a')|\le 2|a-a'|                  \tag{3}\label{eq:lipschitz-a}
\]
for all $a,a'\in[0,1]$, uniformly in $B$.  Also, if
$\|B-B'\|_\infty\le\eta$, then
\[
        \|\psi_B-\psi_{B'}\|_\infty\le 2\eta .             \tag{4}\label{eq:lipschitz-B}
\]
Indeed, the nearest-grid distance changes by at most $\eta$, and both distances
are bounded by one.  Since $\Cbar$ is compact, (4) implies that
$\{\psi_B:B\in\Cbar\}$ is totally bounded in the uniform norm.

We next prove the required uniform law of large numbers.  Fix $\eta>0$ and
choose $B^{(1)},\ldots,B^{(N)}\in\Cbar$ such that every $B\in\Cbar$ satisfies
$\|\psi_B-\psi_{B^{(k)}}\|_\infty\le\eta$ for some $k$.  By the strong law of
large numbers, simultaneously for all $k=1,\ldots,N$,
\[
        \frac1g\sum_{i=1}^g \psi_{B^{(k)}}(Y_i)
        \to \E\psi_{B^{(k)}}(Y_1)
        \qquad\text{a.s.}
\]
Approximating an arbitrary $B$ by a net point gives
\[
        \sup_{B\in\Cbar}\left|
        \frac1g\sum_{i=1}^g\psi_B(Y_i)-\E\psi_B(Y_1)
        \right|\to0
        \qquad\text{a.s.}                                  \tag{5}\label{eq:uniform-slln-fixed}
\]

It remains to replace the deterministic normalization $x_{\max}$ by the random
normalization $M_g$.  Since $x_{\max}>0$, (2) and the zero-block convention imply
that, almost surely, $M_g>0$ for all sufficiently large $g$.  For such $g$ and
all $i\le g$,
\[
        |A_{i,g}-Y_i|
        =\left|\frac{Z_i}{M_g}-\frac{Z_i}{x_{\max}}\right|
        \le \frac{x_{\max}-M_g}{x_{\max}},                  \tag{6}\label{eq:random-fixed-close}
\]
where we used $Z_i\le M_g\le x_{\max}$.  By (3), (6), and (2),
\[
        \sup_{B\in\Cbar}\left|
        \frac1g\sum_{i=1}^g\psi_B(A_{i,g})
        -\frac1g\sum_{i=1}^g\psi_B(Y_i)
        \right|\to0
        \qquad\text{a.s.}
\]
Combining this with (5),
\[
        \sup_{B\in\Cbar}\left|
        \frac1g\sum_{i=1}^g\psi_B(A_{i,g})
        -\E\psi_B(Y_1)
        \right|\to0
        \qquad\text{a.s.}                                  \tag{7}\label{eq:uniform-slln-random}
\]

Define
\[
        \ell_\infty(B):=x_{\max}^2\E\psi_B(Y_1)
        =x_{\max}^2\int_0^1\psi_B(a)\,d\mu_\infty(a).
\]
Using the reduction in \eqref{eq:scalar-reduction}, the bound
$0\le\psi_B\le1$, (2), and (7), we obtain
\[
        \Delta_g:=\sup_{B\in\Cbar}
        |L_B^{(g)}(X_1,\ldots,X_g)-\ell_\infty(B)|
        \to0
        \qquad\text{a.s.}                                  \tag{8}\label{eq:uniform-loss}
\]
Furthermore $0\le L_B^{(g)}\le x_{\max}^2$ and
$0\le\ell_\infty(B)\le x_{\max}^2$, so $0\le\Delta_g\le x_{\max}^2$.  Dominated
convergence yields
\[
        \E\Delta_g\to0.                                    \tag{9}\label{eq:L1-uniform-loss}
\]

Let
\[
        V_\infty:=\inf_{B\in\Cbar}\ell_\infty(B).
\]
The infimum map is $1$-Lipschitz under uniform perturbations; hence
\[
        |R_g^\star-V_\infty|
        \le \E\Delta_g\to0.                                \tag{10}\label{eq:single-risk-limit}
\]
For the two-grid objective, use
\[
        |\min\{a,b\}-\min\{c,d\}|
        \le \max\{|a-c|,|b-d|\}.
\]
Together with (8)--(9), this gives
\[
        R_{\mathrm{PO2},g}^\star
        \to
        W_\infty
        :=\inf_{B_1,B_2\in\Cbar}
          \min\{\ell_\infty(B_1),\ell_\infty(B_2)\}.         \tag{11}\label{eq:two-risk-limit}
\]
But $W_\infty=V_\infty$: the left-hand side is at least $V_\infty$ because each
$\ell_\infty(B_j)\ge V_\infty$, and it is at most $V_\infty$ by choosing both
grids along a minimizing sequence for $\ell_\infty$.  Therefore both risks have
the same limit, proving
\[
        R_g^\star-R_{\mathrm{PO2},g}^\star\to0.
\]

Finally, the same conclusion holds for the strict class $\Cstrict$.  For any
$B\in\Cbar$ and $\rho\in(0,1)$, define
\[
        b_j^{(\rho)}:=(1-\rho)b_j+\rho\,j/7,
        \qquad j=0,\ldots,7.
\]
Then $B^{(\rho)}\in\Cstrict$ and $B^{(\rho)}\to B$.  By (4), the corresponding
losses and limiting losses converge uniformly.  Hence the single-grid and
two-grid infima over $\Cstrict$ equal the infima over $\Cbar$, both at finite
$g$ and in the limiting problem.
\end{proof}

\section{Exact Grid Values}
\label{app:grids}

\paragraph{MPO2 (FP8-snapped).}
\[
\begin{aligned}
B_1&=\{-1,-0.8125,-0.625,-0.5,-0.375,-0.28125,-0.171875,-0.0703125,\\
&\qquad 0.015625,0.109375,0.21875,0.34375,0.46875,0.625,0.75,1\},\\
B_2&=\{-1,-0.75,-0.5625,-0.4375,-0.3125,-0.203125,-0.109375,-0.015625,\\
&\qquad 0.0703125,0.171875,0.28125,0.40625,0.5,0.6875,0.875,1\}.
\end{aligned}
\]

\paragraph{Split87 primary (FP8-snapped).}
\[
\begin{aligned}
\{&-1,-0.8125,-0.625,-0.46875,-0.34375,-0.234375,-0.140625,-0.0546875,\\
&\;0,\;0.0625,\;0.171875,\;0.28125,\;0.40625,\;0.5625,\;0.75,\;1\}.
\end{aligned}
\]

\paragraph{SFP4.}
Grid~A uses the standard E2M1 values $\{0,\pm0.5,\pm1,\pm1.5,\pm2,\pm3,\pm4,\pm6\}$.
Grids~$B^+$ and $B^-$ shift the E2M1 grid by $\pm c$ with $c=0.5$.

\begin{figure}[h]
\centering
\begin{subfigure}[b]{0.48\textwidth}
\includegraphics[width=\textwidth]{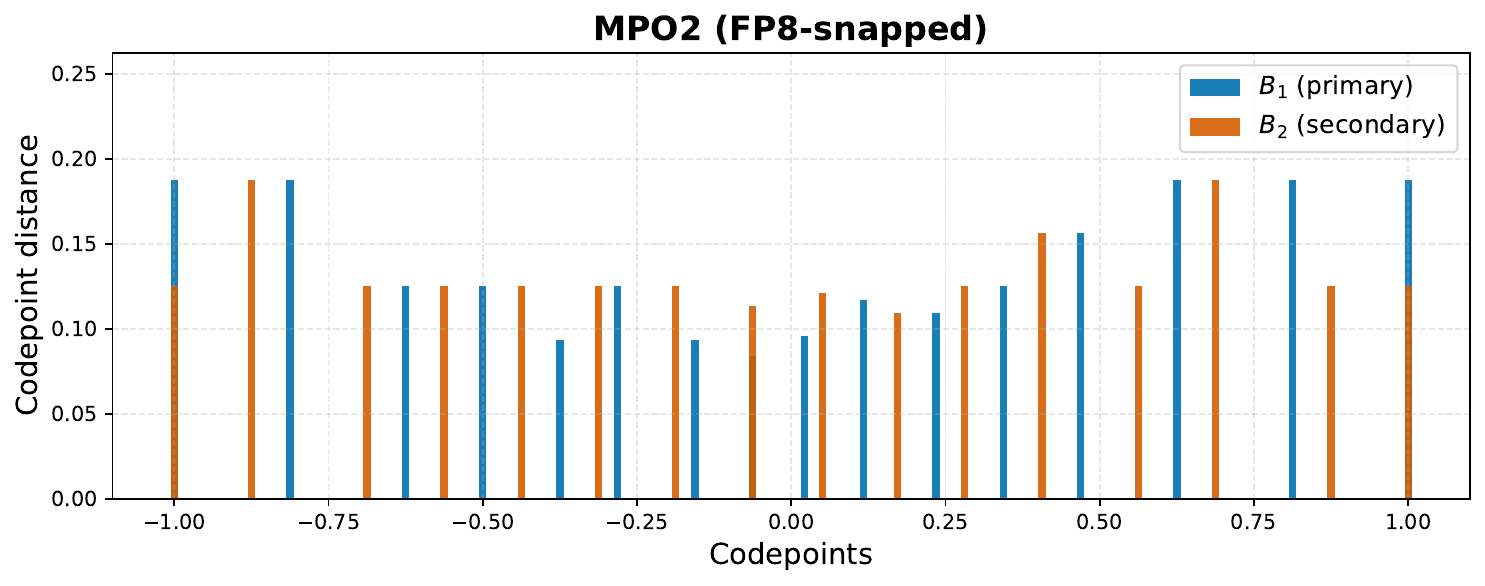}
\caption{MPO2 (FP8-snapped)}
\end{subfigure}
\hfill
\begin{subfigure}[b]{0.48\textwidth}
\includegraphics[width=\textwidth]{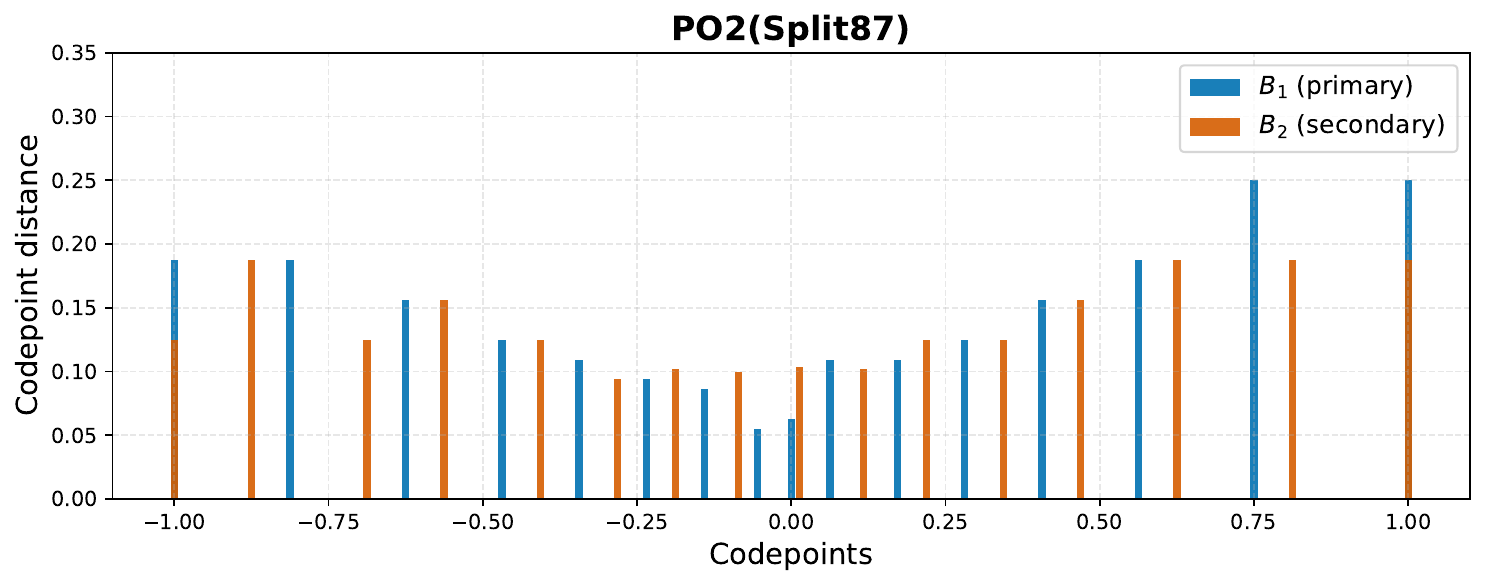}
\caption{PO2(Split87)}
\end{subfigure}

\vspace{0.5em}
\begin{subfigure}[b]{0.48\textwidth}
\includegraphics[width=\textwidth]{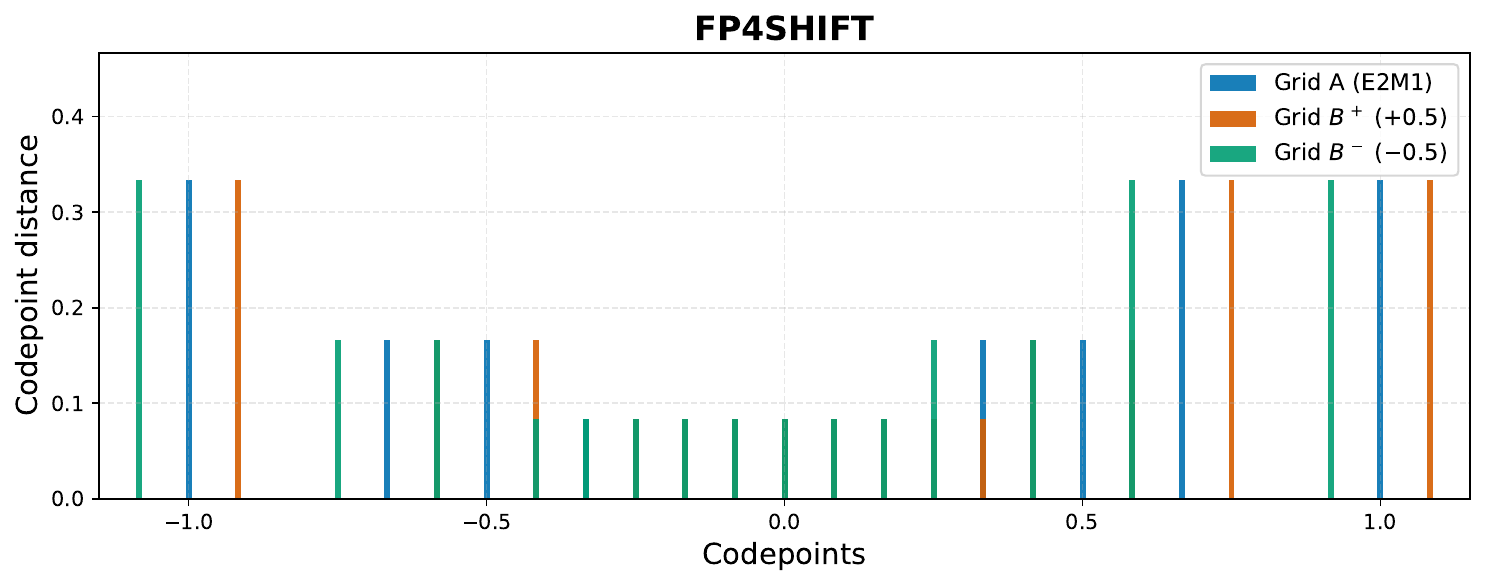}
\caption{FP4SHIFT (three grids)}
\end{subfigure}
\hfill
\begin{subfigure}[b]{0.48\textwidth}
\includegraphics[width=\textwidth]{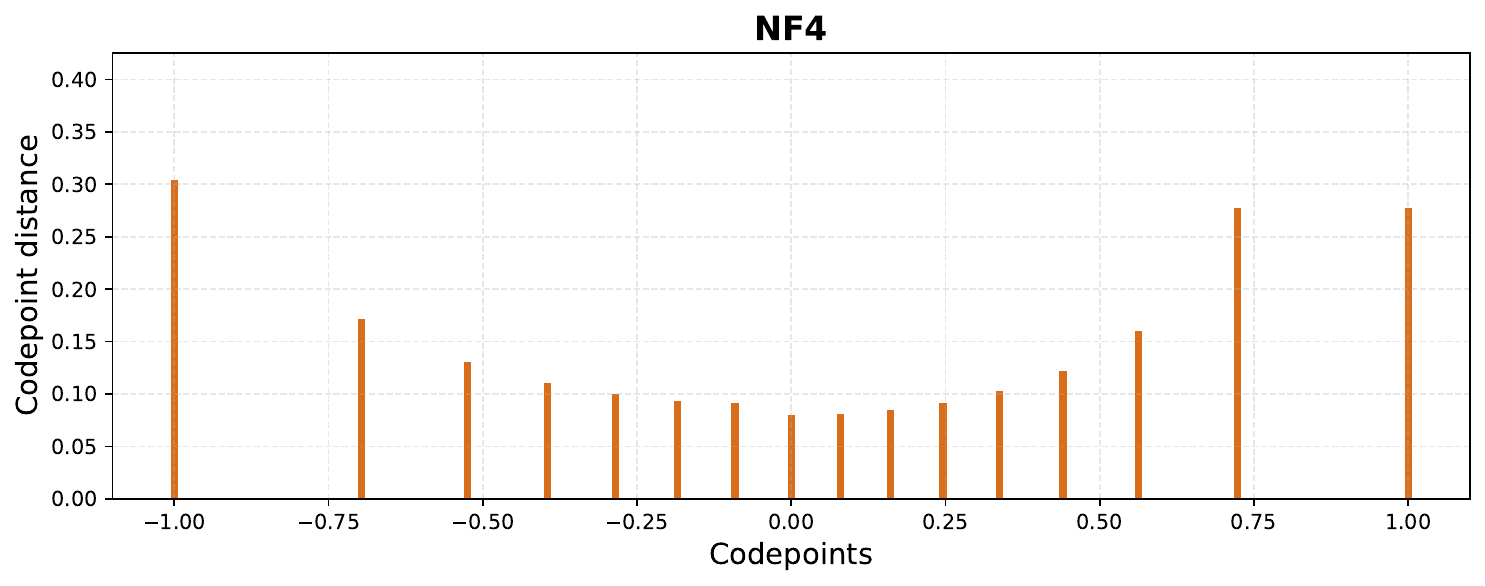}
\caption{NF4 (single grid)}
\end{subfigure}
\caption{Codepoint spacing for PO2 grid families. Each bar is placed at a codepoint's position; its height shows the distance to the next codepoint. Dual-grid pairs interleave their codepoints, increasing resolution in the dense near-zero region.}
\label{fig:grid_vis}
\end{figure}

\section{Additional PTQ results}
\label{app:ptq_results}

Results in the main paper use greedy decoding with a single evaluation run. To show the variance, we repeat experiments on  Qwen3.5-4B  across four seeds and report the resulting spread in Figure~\ref{fig:seed_run}.

\begin{figure}
    \centering
    \includegraphics[width=\linewidth]{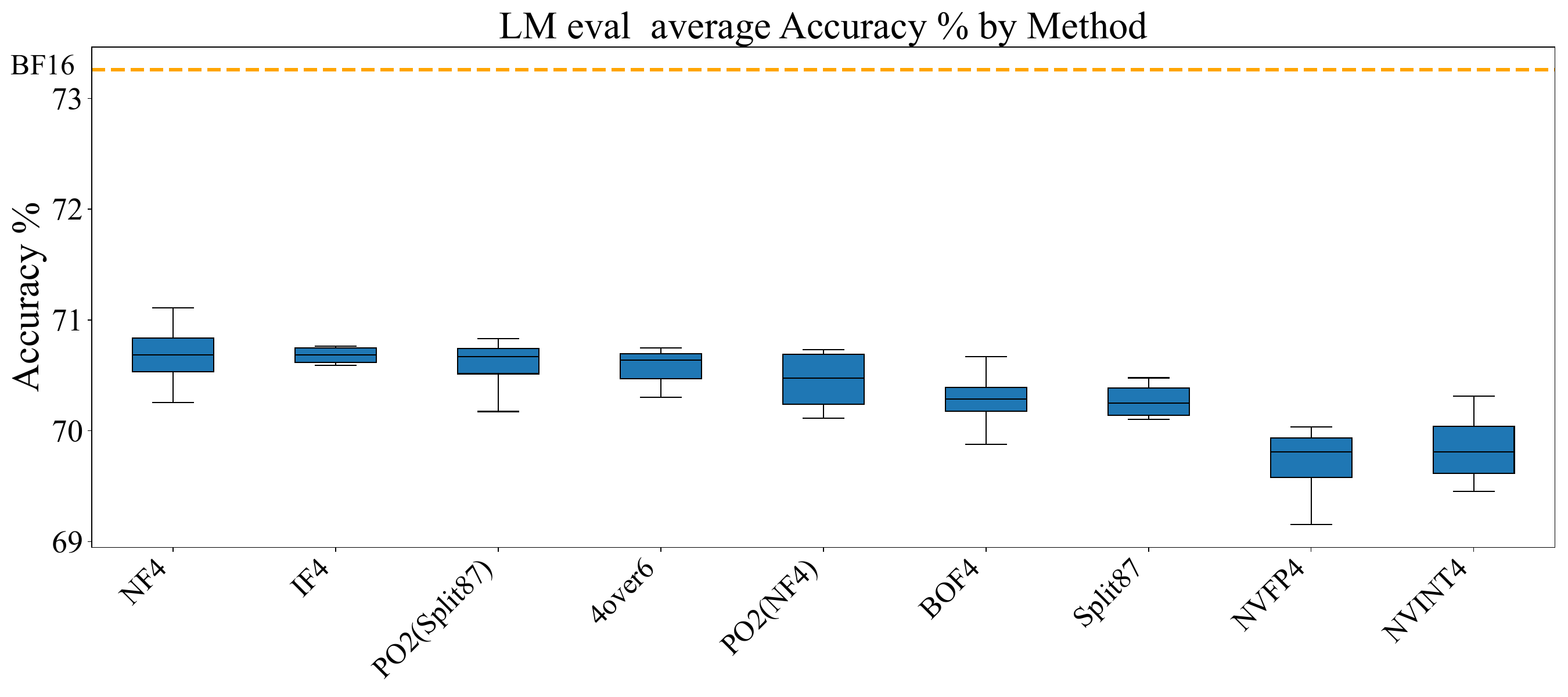}
    \caption{Qwen3.5-4B multiple seed runs.}
    \label{fig:seed_run}
\end{figure}

Tables~\ref{tab:qwen354b-wo-transforms} and~\ref{tab:qwen354b-wa-transforms} compare the identity and Hadamard transforms on Qwen3.5-4B for weight-only and weight+activation quantization, respectively.

\begin{table*}[t]
\centering
\caption{Weight-only quantization (W4A16) on \textbf{Qwen3.5-4B}, comparing the \emph{Identity} (no transform) and \emph{Hadamard} pre-rotation. KL columns report the KL divergence between the quantized and BF16 output distributions on the indicated calibration set ($\downarrow$); EAR$_{\text{top10},\,C4}$ is the exact-agreement rate of the top-10 logits on C4 ($\uparrow$). \emph{Avg} is the mean over Winogrande, ARC-C, ARC-E, Lambada (standard), PIQA, Hellaswag (10-shot), MMLU, IFEval (Prompt), and GSM8K-CoT. Best result per column within each transform block is in \textbf{bold}.}
\label{tab:qwen354b-wo-transforms}
\setlength{\tabcolsep}{4pt}
\resizebox{\textwidth}{!}{%
\begin{tabular}{l ccccc ccccc}
\toprule
 & \multicolumn{5}{c}{\textbf{Identity transform}} & \multicolumn{5}{c}{\textbf{Hadamard transform}} \\
\cmidrule(lr){2-6} \cmidrule(lr){7-11}
Method & KL$_{\text{t10},C4}\!\downarrow$ & EAR$_{\text{t10},C4}\!\uparrow$ & KL$_{\text{W2}}\!\downarrow$ & KL$_{\text{C4}}\!\downarrow$ & Avg$\!\uparrow$ & KL$_{\text{t10},C4}\!\downarrow$ & EAR$_{\text{t10},C4}\!\uparrow$ & KL$_{\text{W2}}\!\downarrow$ & KL$_{\text{C4}}\!\downarrow$ & Avg$\!\uparrow$ \\
\midrule
\textit{BF16 (baseline)} & \textit{0.0000} & \textit{1.0000} & \textit{0.0000} & \textit{0.0000} & \textit{73.26} & \textit{0.0000} & \textit{1.0000} & \textit{0.0000} & \textit{0.0000} & \textit{73.26} \\
\midrule
PO2(NF4) & 0.0200 & 0.9429 & 0.0534 & 0.0258 & 72.17 & 0.0204 & 0.9420 & 0.0521 & 0.0264 & \textbf{72.63} \\
MPO2 & 0.0211 & 0.9411 & 0.0554 & 0.0275 & \textbf{72.67} & \textbf{0.0199} & \textbf{0.9427} & 0.0519 & 0.0259 & 71.95 \\
IF4 & 0.0247 & 0.9361 & 0.0721 & 0.0322 & 72.21 & 0.0251 & 0.9357 & 0.0613 & 0.0326 & 72.37 \\
PO2(Split87) & \textbf{0.0191} & \textbf{0.9437} & \textbf{0.0511} & \textbf{0.0246} & 72.24 & 0.0201 & 0.9426 & \textbf{0.0506} & \textbf{0.0259} & 72.24 \\
NF4 & 0.0246 & 0.9365 & 0.0588 & 0.0319 & 72.36 & 0.0279 & 0.9324 & 0.0675 & 0.0364 & 72.03 \\
Split87 & 0.0223 & 0.9392 & 0.0527 & 0.0291 & 72.12 & 0.0259 & 0.9347 & 0.0679 & 0.0335 & 72.25 \\
NVFP4 & 0.0308 & 0.9287 & 0.0792 & 0.0401 & 71.34 & 0.0370 & 0.9219 & 0.0835 & 0.0486 & 72.01 \\
NVINT4 & 0.0328 & 0.9263 & 0.0810 & 0.0431 & 71.60 & 0.0305 & 0.9291 & 0.0769 & 0.0399 & 71.74 \\
\bottomrule
\end{tabular}%
}
\end{table*}

\begin{table*}[t]
\centering
\caption{Weight and activation quantization (W4A4) on \textbf{Qwen3.5-4B}, comparing the \emph{Identity} (no transform) and \emph{Hadamard} pre-rotation. KL columns report the KL divergence between the quantized and BF16 output distributions on the indicated calibration set ($\downarrow$, lower is better); EAR$_{\text{top10},\,C4}$ is the exact-agreement rate of the top-10 logits on C4 ($\uparrow$). \emph{Avg} is the mean over Winogrande, ARC-C, ARC-E, Lambada (standard), PIQA, Hellaswag (10-shot), MMLU, IFEval (Prompt), and GSM8K-CoT. Best result per column within each transform block is in \textbf{bold}. }
\label{tab:qwen354b-wa-transforms}
\setlength{\tabcolsep}{4pt}
\resizebox{\textwidth}{!}{%
\begin{tabular}{l ccccc ccccc}
\toprule
 & \multicolumn{5}{c}{\textbf{Identity transform}} & \multicolumn{5}{c}{\textbf{Hadamard transform}} \\
\cmidrule(lr){2-6} \cmidrule(lr){7-11}
Method & KL$_{\text{t10},C4}\!\downarrow$ & EAR$_{\text{t10},C4}\!\uparrow$ & KL$_{\text{W2}}\!\downarrow$ & KL$_{\text{C4}}\!\downarrow$ & Avg$\!\uparrow$ & KL$_{\text{t10},C4}\!\downarrow$ & EAR$_{\text{t10},C4}\!\uparrow$ & KL$_{\text{W2}}\!\downarrow$ & KL$_{\text{C4}}\!\downarrow$ & Avg$\!\uparrow$ \\
\midrule
\textit{BF16 (baseline)} & \textit{0.0000} & \textit{1.0000} & \textit{0.0000} & \textit{0.0000} & \textit{73.26} & \textit{0.0000} & \textit{1.0000} & \textit{0.0000} & \textit{0.0000} & \textit{73.26} \\
\midrule
PO2(NF4) & 0.0406 & 0.9166 & 0.0919 & 0.0525 & 70.68 & 0.0365 & 0.9214 & 0.0851 & 0.0471 & 71.38 \\
Split87 & 0.0438 & 0.9132 & 0.0957 & 0.0571 & 70.48 & 0.0477 & 0.9100 & 0.1073 & 0.0619 & \textbf{71.52} \\
IF4 & 0.0471 & 0.9097 & 0.1125 & 0.0616 & 70.76 & 0.0425 & 0.9150 & 0.0955 & 0.0552 & 71.23 \\
PO2(Split87) & \textbf{0.0376} & \textbf{0.9195} & \textbf{0.0869} & \textbf{0.0487} & 70.71 & 0.0360 & 0.9220 & 0.0821 & 0.0464 & 71.22 \\
NF4 & 0.0490 & 0.9083 & 0.1054 & 0.0637 & 71.11 & 0.0528 & 0.9053 & 0.1160 & 0.0687 & 70.49 \\
MPO2 & 0.0502 & 0.9073 & 0.1092 & 0.0666 & 70.81 & \textbf{0.0344} & \textbf{0.9235} & \textbf{0.0798} & \textbf{0.0447} & 70.72 \\
4over6 & 0.0516 & 0.9059 & 0.1112 & 0.0670 & 70.64 & 0.0578 & 0.9010 & 0.1201 & 0.0752 & 70.31 \\
NVINT4 & 0.0714 & 0.8872 & 0.1558 & 0.0957 & 70.31 & 0.0505 & 0.9074 & 0.1158 & 0.0663 & 70.23 \\
NVFP4 & 0.0586 & 0.8993 & 0.1305 & 0.0767 & 69.72 & 0.0712 & 0.8895 & 0.1459 & 0.0935 & 70.57 \\
\bottomrule
\end{tabular}%
}
\end{table*}

\section{Additional QAT Details and Results}
\label{app:qat}

\subsection{Hardware Requirements}
\label{app:hardware}

The QAT experiments, as well as the majority of PTQ experiments, were performed on NVIDIA Hopper or Blackwell nodes (8xH100 or 8xB200). The workloads did not require parallelism beyond singular nodes.

\subsection{Model Hyper-Parameters}

For QAT experiments, we follow the setup of~\citet{panferov2025queststabletrainingllms}. We record the model-specific hyper-parameters in Table~\ref{tab:model_params} and common pre-training hyper-parameters in Table~\ref{tab:common_params}.

\begin{table}[t]
    \centering
    \caption{Model-specific QAT hyper-parameters.}
    \label{tab:model_params}
    \begin{tabular}{lcccc}
        \toprule
        \textbf{Hyperparameter} & \textbf{30M} & \textbf{50M} & \textbf{100M} \\
        \midrule
        Number of Layers & 6 & 7 & 8 \\
        Embedding Dimension & 640 & 768 & 1024 \\
        Attention Heads & 5 & 6 & 8 \\
        Learning Rate & 0.0012 & 0.0012 & 0.0009 \\
        \bottomrule
    \end{tabular}
\end{table}

\begin{table}[t]
    \centering
    \caption{Common hyper-parameters used across all model sizes for QAT.}
    \label{tab:common_params}
    \begin{tabular}{lc}
        \toprule
        \textbf{Hyperparameter} & \textbf{Value} \\
        \midrule
        Sequence Length & 512 \\
        Batch Size & 512 \\
        Optimizer & AdamW \\
        Learning Rate Schedule & Cosine, 10\% warm-up \\
        Gradient Clipping & 1.0 \\
        Weight Decay ($\gamma$) & 0.1 \\
        Number of GPUs & 8 \\
        Data Type (optimizer/accumulators) & \texttt{FP32} \\
        \bottomrule
    \end{tabular}
\end{table}

\subsection{Weight-Only QAT Experiments}

\begin{figure}[t]
    \centering
    \includegraphics[width=1.0\linewidth]{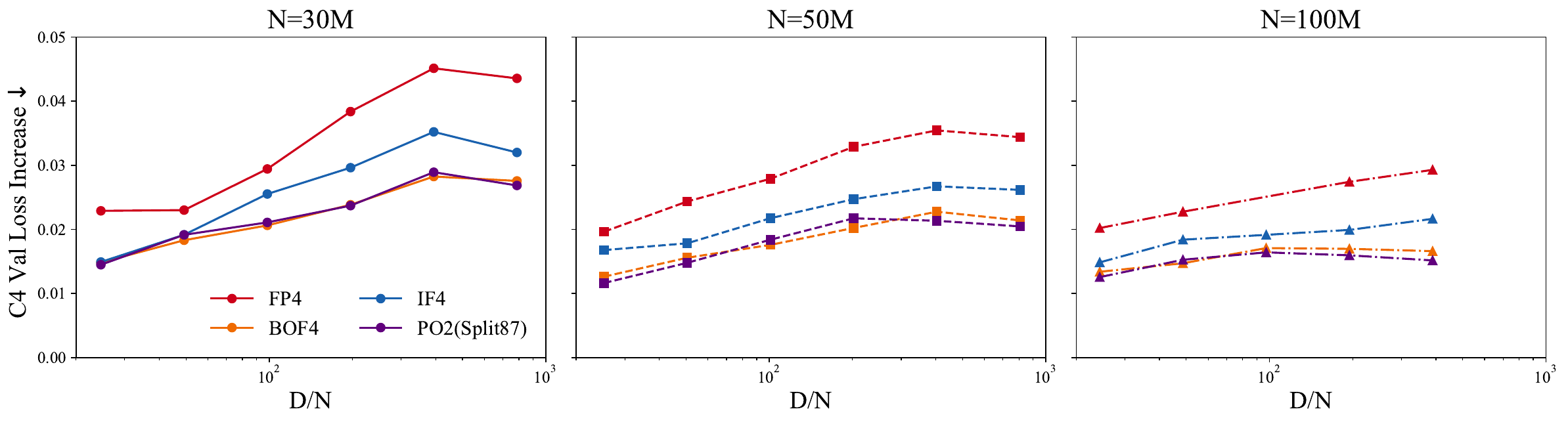}
    \caption{Weight-only QAT final C4 validation loss gap relative to BF16 pre-training.}
    \label{fig:qat_wo}
\end{figure}

Additionally, we validate multi-grid quantization for weight-only QAT. The results, shown in Figure~\ref{fig:qat_wo}, generally agree with the full QAT results, but multiple grids show lesser improvement.

\section{Kernels for Current Hardware}
\subsection{SFP4}
\label{sec:sfp4-kernels}
For the FPShift approach, we do not need a new main matmul kernel, but instead a set of pre- and post-processing kernels.
In pre-processing, the encoded weight scale-factors are decoded into \textquote{clean} scales (with the group-bits cleared)
and shifts, i.e., scales times shift direction. For activations, we pre-calculate the sum over every group.
After the main matmul, we run the correction kernel which multiplies the shifts and group sums, another gemm of $1/16$
the size of the main gemm, but running at higher precision.

\begin{table}[tbph]
\centering
\caption{Benchmark results on RTX 5090 for SFP4.}
\label{tab:sfp4-benchmark}
\begin{tabular}{lrrr}
\toprule
\textbf{Metric / M,N,K} & \textbf{4096, 5120, 5120} & \textbf{4096, 5120, 27648} & \textbf{4096, 27648, 5120} \\
\midrule
\multicolumn{4}{l}{\textit{Kernel Performance}} \\
nvfp4\_gemm       & 0.1464 ms & 0.7905 ms & 0.7979 ms \\
baseline          & 0.2068 ms & 1.0364 ms & 0.8448 ms \\
SFP4              & 0.3199 ms          & 1.5340 ms          & 1.3603 ms          \\
$\quad$rel. overhead   & +54.7\%          & +48.0\%          & +61.0\%          \\
\midrule
\multicolumn{4}{l}{\textit{Non-matmul Breakdown}} \\
act. quant             & 0.0591 ms & 0.2737 ms & 0.0591 ms \\
weight\_preprocess        & 0.0061 ms & 0.0287 ms & 0.0266 ms \\
activation\_sum   & 0.0243 ms & 0.1037 ms & 0.0243 ms \\
correction        & 0.0796 ms & 0.3294 ms & 0.3724 ms \\
\bottomrule
\end{tabular}
\end{table}

Benchmark results on an RTX 5090 are presented in \autoref{tab:sfp4-benchmark}.
Compared to the baseline, which includes both the raw nvfp4 matmul and the activation quantization kernel, we see an overhead of about 50-60\%. We also show the breakdown of the overhead over pure matmul into quantization, preprocessing, and post-processing kernels.
For shapes with large inputs and small outputs (down-projection), the input-quantization is expensive, and as such the relative cost of the correction matmul is comparatively small. For small inputs and large outputs, on the other hand, input processing is cheap and most overhead comes from the correction.

\subsection{General PO2 kernels}
\label{sec:po2-kernels}
We also provide two kernels that allow running general PO2-quantized matmuls on current hardware.
The first targets the case of memory-bound decoding. As such, it assumes its activation input
is small and there would be no point in quantizing it to lower bit-width. It accepts two lookup
tables of 16 BF16-entries each; as CUDA-cores do not have arithmetic units for FP8 numbers, this
is actually more efficient than having lower-bit entries in this scenario. Note that even at
16-bit per entry, the entire table is just $32 \times 2$ bytes, fitting in a contiguous address range
that spans only 16 shared memory banks. Thus, even if the threads of each warp perform completely random
access into the table, there are no bank conflicts that need to be serialized.
For an $m \times n \times k$ matmul, each inner product (k-dimension) is handled collectively by one warp,
and each thread computes $m$ (batch dimension) results in parallel, where $m \in [1, 8]$ is a compile-time
constant.

For M=1, N=27648, K=5120, we measure a running time of 50 µs, corresponding to a memory throughput of 91\% of peak bandwidth on an RTX 5090. Increasing to $M=2$ is almost free at 52 µs, $M=4$ at 57 µs, but for $M=8$ increased register pressure leads to lowered occupancy, and we only achieve 58\% memory throughput with 92 µs running time. 

On the other hand, for large $m$ it is necessary to use tensor cores for good performance. Here, we can show
that if the shapes are sufficiently nice (multiples of 128), one can use a large $128\times128$ per-block
tile so that weight values decoded into shared memory can be reused sufficiently often. In that way, the additional
work of software-decoding PO2 can be hidden almost perfectly behind the bf16 \texttt{mma} tensorcore math.

Using the same $M=27648$ $K=5120$ configuration as above, but with a larger batch size $M=256$, the tensor-core kernel requires 555 µs running time at 55\% tensor-core utilization. This number goes up to 80\% for $M=4096$ with a running time of 5.8 ms.
Running a pure BF16 matmul at the same shape costs 4.8 ms, so the lookup-table overhead is about 20\%.

\section{Format Snapping}
\label{sec:universality}



\label{sec:format-snapping}
FP8 E4M3 snapping is nearly free for 16-level grids because the representable values in $[-1,1]$ are dense relative to the codebook size.
Table~\ref{tab:format_snapping} confirms this: FP8 is neutral or slightly beneficial on Qwen3-8B and Qwen3.5-9B, and costs 12\% relative KL on Mistral-7B.
FP6 E3M2 is more restrictive, costing 10--20\% relative KL depending on the model.

\begin{table}[b]
\centering
\caption{MPO2 format snapping: weight-only WikiText-2 KL divergence.}
\label{tab:format_snapping}
\begin{tabular}{lccc}
\toprule
Model & Raw MPO2 & MPO2-FP8 & MPO2-FP6 \\
\midrule
Qwen3-8B & 0.0467 & 0.0452 & 0.0544 \\
Qwen3.5-9B & 0.0642 & 0.0639 & 0.0715 \\
Mistral-7B-v0.3 & 0.0137 & 0.0153 & 0.0157 \\
\bottomrule
\end{tabular}
\end{table}

\section{Full downstream task results}
\label{app:full_tables}

We provide KL divergences, EAR\cite{helcig2026statisticallylosslessquantizationlargelanguage} and full downstream task results for each model in Tables~\ref{tab:wa-llama323b},\ref{tab:wa-qwen38b},\ref{tab:wa-qwen314b},\ref{tab:wa-qwen352b},\ref{tab:wa-qwen354b},\ref{tab:wa-qwen359b},\ref{tab:wa-qwen3527b}.

\begin{table*}[t]
\centering
\caption{Weight and activation quantization (W4A4) on \textbf{Llama-3.2-3B-Instruct}. KL columns report the KL divergence between the quantized and BF16 output distributions on the indicated calibration set ($\downarrow$); EAR$_{\text{top10},C4}$ (Expected Acceptance Rate), the maximum token-agreement probability under optimal coupling on C4 ($\uparrow$) Benchmark columns: Winogrande (WG), ARC-C, ARC-E, Lambada-standard (LMB), PIQA, Hellaswag 10-shot (HS10), MMLU, IFEval-Prompt (IFE-P), and GSM8K-CoT. \emph{Avg} is their mean and \emph{Recovery} is \emph{Avg}\,$\div$\,BF16. Best non-BF16 result per column is in \textbf{bold}.}
\label{tab:wa-llama323b}
\setlength{\tabcolsep}{3pt}
\resizebox{\textwidth}{!}{%
\begin{tabular}{l cccc ccccccccc c c}
\toprule
 & \multicolumn{4}{c}{\textbf{Distribution-level}} & \multicolumn{9}{c}{\textbf{Benchmarks}} & & \\
\cmidrule(lr){2-5} \cmidrule(lr){6-14}
Method & KL$_{\text{t10},C4}\!\downarrow$ & EAR$_{\text{t10},C4}\!\uparrow$ & KL$_{\text{W2}}\!\downarrow$ & KL$_{\text{C4}}\!\downarrow$ & WG & ARC-C & ARC-E & LMB & PIQA & HS10 & MMLU & IFE-P & GSM8K & Avg & \textbf{Recovery} \\
\midrule
\textit{BF16 (baseline)} & \textit{0.0000} & \textit{1.0000} & \textit{0.0000} & \textit{0.0000} & \textit{71.03} & \textit{71.30} & \textit{45.99} & \textit{61.25} & \textit{76.99} & \textit{73.58} & \textit{62.21} & \textit{70.98} & \textit{77.94} & \textit{67.92} & \textit{100.00\%} \\
\midrule
PO2(Split87) & \textbf{0.0467} & \textbf{0.9122} & \textbf{0.0647} & \textbf{0.0615} & 68.19 & 70.41 & 45.90 & 59.60 & 76.39 & \textbf{72.25} & 60.27 & \textbf{69.87} & \textbf{74.60} & \textbf{66.39} & \textbf{97.75\%} \\
PO2(NF4) & 0.0505 & 0.9068 & 0.0710 & 0.0677 & 68.35 & \textbf{71.68} & 46.08 & 58.39 & 76.77 & 71.28 & 60.05 & 68.58 & 73.69 & 66.10 & 97.32\% \\
NF4 & 0.0575 & 0.9015 & 0.0778 & 0.0774 & 67.80 & 69.87 & 45.31 & 60.08 & \textbf{77.31} & 71.78 & 59.20 & 69.32 & 70.96 & 65.74 & 96.79\% \\
Split87 & 0.0516 & 0.9060 & 0.0734 & 0.0698 & 68.98 & 69.87 & 44.54 & 58.99 & 76.77 & 72.00 & \textbf{60.35} & 68.21 & 71.42 & 65.68 & 96.71\% \\
SFP4 & 0.0612 & 0.8988 & 0.0850 & 0.0799 & 68.82 & 69.15 & 45.90 & \textbf{60.72} & 76.39 & 71.05 & 59.49 & 66.91 & 72.02 & 65.61 & 96.60\% \\
4over6 & 0.0610 & 0.8990 & 0.0856 & 0.0800 & 69.22 & 68.94 & 44.80 & 58.65 & 76.28 & 71.65 & 59.23 & 67.10 & 73.77 & 65.51 & 96.46\% \\
IF4 & 0.0556 & 0.9038 & 0.0751 & 0.0733 & 67.96 & 68.06 & 44.37 & 59.52 & 76.12 & 71.63 & 59.48 & 66.73 & 73.92 & 65.31 & 96.16\% \\
BOF4 & 0.0572 & 0.9024 & 0.0776 & 0.0761 & 67.48 & 70.03 & 44.54 & 58.59 & 76.99 & 71.77 & 59.93 & 65.25 & 72.63 & 65.25 & 96.07\% \\
NVFP4 & 0.0676 & 0.8931 & 0.0943 & 0.0895 & 67.72 & 70.03 & 43.94 & 58.43 & 75.41 & 71.37 & 58.84 & 67.84 & 71.80 & 65.04 & 95.77\% \\
NVINT4 & 0.0820 & 0.8816 & 0.1160 & 0.1118 & 67.25 & 67.30 & 44.20 & 56.72 & 76.12 & 71.22 & 57.99 & 64.88 & 70.36 & 64.00 & 94.24\% \\
\bottomrule
\end{tabular}%
}
\end{table*}

\begin{table*}[t]
\centering
\caption{Weight and activation quantization (W4A4) on \textbf{Qwen3-8B}. KL columns report the KL divergence between the quantized and BF16 output distributions on the indicated calibration set ($\downarrow$); EAR$_{\text{top10},C4}$ (Expected Acceptance Rate), the maximum token-agreement probability under optimal coupling on C4 ($\uparrow$). Benchmark columns: Winogrande (WG), ARC-C, ARC-E, Lambada-standard (LMB), PIQA, Hellaswag 10-shot (HS10), MMLU, IFEval-Prompt (IFE-P), and GSM8K-CoT. \emph{Avg} is their mean and \emph{Recovery} is \emph{Avg}\,$\div$\,BF16. Best non-BF16 result per column is in \textbf{bold}.}
\label{tab:wa-qwen38b}
\setlength{\tabcolsep}{3pt}
\resizebox{\textwidth}{!}{%
\begin{tabular}{l cccc ccccccccc c c}
\toprule
 & \multicolumn{4}{c}{\textbf{Distribution-level}} & \multicolumn{9}{c}{\textbf{Benchmarks}} & & \\
\cmidrule(lr){2-5} \cmidrule(lr){6-14}
Method & KL$_{\text{t10},C4}\!\downarrow$ & EAR$_{\text{t10},C4}\!\uparrow$ & KL$_{\text{W2}}\!\downarrow$ & KL$_{\text{C4}}\!\downarrow$ & WG & ARC-C & ARC-E & LMB & PIQA & HS10 & MMLU & IFE-P & GSM8K & Avg & \textbf{Recovery} \\
\midrule
\textit{BF16 (baseline)} & \textit{0.0000} & \textit{1.0000} & \textit{0.0000} & \textit{0.0000} & \textit{70.40} & \textit{80.77} & \textit{56.14} & \textit{61.11} & \textit{77.97} & \textit{76.54} & \textit{72.90} & \textit{80.04} & \textit{91.13} & \textit{74.11} & \textit{100.00\%} \\
\midrule
PO2(NF4) & 0.0473 & 0.9173 & 0.0799 & 0.0562 & 70.01 & \textbf{81.57} & 55.89 & 59.54 & 76.39 & 75.32 & 71.09 & 81.15 & \textbf{89.76} & \textbf{73.41} & \textbf{99.06\%} \\
BOF4 & 0.0486 & 0.9158 & 0.0805 & 0.0574 & 70.24 & 79.46 & 57.17 & \textbf{60.22} & 77.15 & 74.98 & 71.12 & 81.52 & 88.10 & 73.33 & 98.94\% \\
PO2(Split87) & \textbf{0.0412} & \textbf{0.9220} & \textbf{0.0689} & \textbf{0.0487} & \textbf{70.40} & 79.92 & 55.29 & 59.38 & 76.22 & 75.06 & \textbf{71.57} & 81.52 & 88.93 & 73.15 & 98.70\% \\
SFP4 & 0.0546 & 0.9115 & 0.0912 & 0.0645 & 69.53 & 80.81 & 55.55 & 59.11 & 76.28 & 74.43 & 70.78 & 82.07 & 88.93 & 73.05 & 98.57\% \\
NF4 & 0.0514 & 0.9136 & 0.0867 & 0.0611 & 69.14 & 79.67 & 56.14 & 58.84 & 76.44 & 75.20 & 71.44 & 81.33 & 87.95 & 72.91 & 98.38\% \\
Split87 & 0.0457 & 0.9185 & 0.0761 & 0.0535 & 69.85 & 78.79 & 55.20 & 59.15 & 77.26 & 75.04 & 71.54 & 81.33 & 87.87 & 72.89 & 98.36\% \\
NVFP4 & 0.0600 & 0.9062 & 0.0994 & 0.0720 & 69.69 & 79.59 & 54.78 & 58.68 & \textbf{77.42} & 74.89 & 71.00 & 81.33 & 87.57 & 72.77 & 98.19\% \\
IF4 & 0.0507 & 0.9130 & 0.0855 & 0.0616 & 69.38 & 78.37 & 53.58 & 59.73 & 75.90 & 74.99 & 70.72 & \textbf{83.18} & 88.86 & 72.74 & 98.16\% \\
4over6 & 0.0544 & 0.9115 & 0.0900 & 0.0647 & 68.98 & 79.08 & 55.29 & 59.42 & 76.71 & 74.54 & 71.29 & 80.41 & 88.48 & 72.69 & 98.08\% \\
NVINT4 & 0.0768 & 0.8916 & 0.1276 & 0.0942 & 70.01 & 75.97 & 53.33 & 57.31 & 76.93 & 74.47 & 69.55 & 80.41 & 88.40 & 71.82 & 96.91\% \\
\bottomrule
\end{tabular}%
}
\end{table*}

\begin{table*}[t]
\centering
\caption{Weight and activation quantization (W4A4) on \textbf{Qwen3-14B}. KL columns report the KL divergence between the quantized and BF16 output distributions on the indicated calibration set ($\downarrow$); EAR$_{\text{top10},C4}$ (Expected Acceptance Rate), the maximum token-agreement probability under optimal coupling on C4 ($\uparrow$) Benchmark columns: Winogrande (WG), ARC-C, ARC-E, Lambada-standard (LMB), PIQA, Hellaswag 10-shot (HS10), MMLU, IFEval-Prompt (IFE-P), and GSM8K-CoT. \emph{Avg} is their mean and \emph{Recovery} is \emph{Avg}\,$\div$\,BF16. Best non-BF16 result per column is in \textbf{bold}.}
\label{tab:wa-qwen314b}
\setlength{\tabcolsep}{3pt}
\resizebox{\textwidth}{!}{%
\begin{tabular}{l cccc ccccccccc c c}
\toprule
 & \multicolumn{4}{c}{\textbf{Distribution-level}} & \multicolumn{9}{c}{\textbf{Benchmarks}} & & \\
\cmidrule(lr){2-5} \cmidrule(lr){6-14}
Method & KL$_{\text{t10},C4}\!\downarrow$ & EAR$_{\text{t10},C4}\!\uparrow$ & KL$_{\text{W2}}\!\downarrow$ & KL$_{\text{C4}}\!\downarrow$ & WG & ARC-C & ARC-E & LMB & PIQA & HS10 & MMLU & IFE-P & GSM8K & Avg & \textbf{Recovery} \\
\midrule
\textit{BF16 (baseline)} & \textit{0.0000} & \textit{1.0000} & \textit{0.0000} & \textit{0.0000} & \textit{74.51} & \textit{82.74} & \textit{60.49} & \textit{64.29} & \textit{80.03} & \textit{79.80} & \textit{77.18} & \textit{85.40} & \textit{92.04} & \textit{77.39} & \textit{100.00\%} \\
\midrule
IF4 & 0.0382 & 0.9236 & 0.0626 & 0.0470 & 73.16 & 81.94 & 59.30 & 63.54 & 79.65 & 79.10 & \textbf{76.46} & \textbf{85.58} & 92.49 & \textbf{76.80} & \textbf{99.24\%} \\
Split87 & 0.0337 & 0.9291 & 0.0535 & 0.0397 & 73.88 & 82.24 & 60.07 & 63.03 & 79.00 & \textbf{79.22} & 75.48 & 85.21 & 91.74 & 76.65 & 99.05\% \\
4over6 & 0.0410 & 0.9229 & 0.0680 & 0.0503 & 72.93 & 82.41 & \textbf{60.58} & 62.37 & 79.38 & 79.14 & 75.87 & 84.84 & 91.81 & 76.59 & 98.97\% \\
PO2(Split87) & \textbf{0.0308} & \textbf{0.9321} & \textbf{0.0515} & \textbf{0.0376} & 73.01 & 82.20 & 60.15 & \textbf{63.90} & 79.82 & 78.97 & 75.87 & 84.29 & 90.98 & 76.58 & 98.95\% \\
BOF4 & 0.0367 & 0.9258 & 0.0587 & 0.0437 & 73.72 & 81.82 & 59.98 & 63.79 & 78.84 & 78.94 & 76.09 & 84.29 & 91.05 & 76.50 & 98.85\% \\
NF4 & 0.0379 & 0.9253 & 0.0628 & 0.0455 & 72.61 & 82.03 & 60.41 & 62.74 & 79.00 & 79.07 & 75.43 & 85.40 & 91.36 & 76.45 & 98.79\% \\
PO2(NF4) & 0.0351 & 0.9285 & 0.0560 & 0.0419 & 73.72 & \textbf{82.74} & 60.24 & 62.39 & 78.89 & 78.58 & 75.90 & 83.55 & 91.58 & 76.40 & 98.72\% \\
SFP4 & 0.0402 & 0.9235 & 0.0667 & 0.0488 & 73.32 & 81.31 & 59.39 & 62.39 & 79.27 & 78.52 & 75.42 & 83.92 & 92.12 & 76.18 & 98.44\% \\
NVFP4 & 0.0441 & 0.9196 & 0.0727 & 0.0537 & 73.40 & 80.05 & 57.85 & 61.61 & 79.60 & 78.46 & 75.49 & 84.29 & 92.49 & 75.92 & 98.10\% \\
NVINT4 & 0.0613 & 0.9024 & 0.1000 & 0.0752 & 72.30 & 80.85 & 58.70 & 62.27 & 78.73 & 78.52 & 74.93 & 83.55 & \textbf{93.18} & 75.89 & 98.07\% \\
\bottomrule
\end{tabular}%
}
\end{table*}

\begin{table*}[t]
\centering
\caption{Weight and activation quantization (W4A4) on \textbf{Qwen3.5-2B}. KL columns report the KL divergence between the quantized and BF16 output distributions on the indicated calibration set ($\downarrow$); EAR$_{\text{top10},C4}$ (Expected Acceptance Rate), the maximum token-agreement probability under optimal coupling on C4 ($\uparrow$) Benchmark columns: Winogrande (WG), ARC-C, ARC-E, Lambada-standard (LMB), PIQA, Hellaswag 10-shot (HS10), MMLU, IFEval-Prompt (IFE-P), and GSM8K-CoT. \emph{Avg} is their mean and \emph{Recovery} is \emph{Avg}\,$\div$\,BF16. Best non-BF16 result per column is in \textbf{bold}.}
\label{tab:wa-qwen352b}
\setlength{\tabcolsep}{3pt}
\resizebox{\textwidth}{!}{%
\begin{tabular}{l cccc ccccccccc c c}
\toprule
 & \multicolumn{4}{c}{\textbf{Distribution-level}} & \multicolumn{9}{c}{\textbf{Benchmarks}} & & \\
\cmidrule(lr){2-5} \cmidrule(lr){6-14}
Method & KL$_{\text{t10},C4}\!\downarrow$ & EAR$_{\text{t10},C4}\!\uparrow$ & KL$_{\text{W2}}\!\downarrow$ & KL$_{\text{C4}}\!\downarrow$ & WG & ARC-C & ARC-E & LMB & PIQA & HS10 & MMLU & IFE-P & GSM8K & Avg & \textbf{Recovery} \\
\midrule
\textit{BF16 (baseline)} & \textit{0.0000} & \textit{1.0000} & \textit{0.0000} & \textit{0.0000} & \textit{64.09} & \textit{66.20} & \textit{41.21} & \textit{47.60} & \textit{72.47} & \textit{62.67} & \textit{59.29} & \textit{66.36} & \textit{70.20} & \textit{61.12} & \textit{100.00\%} \\
\midrule
Split87 & 0.0592 & 0.8956 & 0.0954 & 0.0766 & 61.96 & 61.28 & 39.76 & 41.49 & 71.22 & \textbf{61.01} & 52.94 & 61.55 & \textbf{65.81} & \textbf{57.45} & \textbf{93.99\%} \\
PO2(NF4) & 0.0546 & 0.8995 & 0.0874 & 0.0720 & 61.88 & 55.64 & 39.25 & 44.17 & \textbf{72.47} & 60.79 & \textbf{56.35} & 62.66 & 62.02 & 57.25 & 93.66\% \\
PO2(Split87) & \textbf{0.0504} & \textbf{0.9034} & \textbf{0.0814} & \textbf{0.0663} & 61.96 & 57.95 & 38.14 & 42.42 & 71.71 & 60.07 & 52.16 & \textbf{63.96} & 64.37 & 56.97 & 93.21\% \\
IF4 & 0.0663 & 0.8896 & 0.1056 & 0.0867 & 59.98 & 60.48 & 38.05 & 43.45 & 71.49 & 60.20 & 54.67 & 59.70 & 62.40 & 56.71 & 92.79\% \\
NF4 & 0.0651 & 0.8905 & 0.1055 & 0.0856 & 60.14 & \textbf{61.32} & 38.05 & 43.72 & 71.49 & 59.51 & 55.33 & 61.37 & 59.29 & 56.69 & 92.75\% \\
NVFP4 & 0.0782 & 0.8796 & 0.1327 & 0.1029 & 61.17 & 60.65 & \textbf{40.10} & 41.82 & 70.73 & 59.24 & 53.41 & 61.55 & 61.03 & 56.63 & 92.66\% \\
SFP4 & 0.0688 & 0.8872 & 0.1120 & 0.0905 & 62.04 & 57.37 & 39.08 & 42.38 & 71.49 & 59.86 & 53.16 & 61.55 & 60.20 & 56.35 & 92.19\% \\
4over6 & 0.0708 & 0.8860 & 0.1109 & 0.0931 & 60.30 & 58.00 & 36.69 & 40.73 & 70.78 & 59.63 & 55.63 & 59.52 & 64.82 & 56.23 & 92.00\% \\
BOF4 & 0.0647 & 0.8910 & 0.1102 & 0.0846 & 61.40 & 58.21 & 37.80 & 44.36 & 70.78 & 59.16 & 54.05 & 62.85 & 53.37 & 55.78 & 91.26\% \\
NVINT4 & 0.1075 & 0.8583 & 0.1744 & 0.1448 & 59.51 & 57.79 & 36.18 & 40.15 & 71.11 & 58.33 & 43.43 & 53.97 & 58.30 & 53.20 & 87.03\% \\
\bottomrule
\end{tabular}%
}
\end{table*}

\begin{table*}[t]
\centering
\caption{Weight and activation quantization (W4A4) on \textbf{Qwen3.5-4B}. KL columns report the KL divergence between the quantized and BF16 output distributions on the indicated calibration set ($\downarrow$); EAR$_{\text{top10},C4}$ (Expected Acceptance Rate), the maximum token-agreement probability under optimal coupling on C4 ($\uparrow$) Benchmark columns: Winogrande (WG), ARC-C, ARC-E, Lambada-standard (LMB), PIQA, Hellaswag 10-shot (HS10), MMLU, IFEval-Prompt (IFE-P), and GSM8K-CoT. \emph{Avg} is their mean and \emph{Recovery} is \emph{Avg}\,$\div$\,BF16. Best non-BF16 result per column is in \textbf{bold}.}
\label{tab:wa-qwen354b}
\setlength{\tabcolsep}{3pt}
\resizebox{\textwidth}{!}{%
\begin{tabular}{l cccc ccccccccc c c}
\toprule
 & \multicolumn{4}{c}{\textbf{Distribution-level}} & \multicolumn{9}{c}{\textbf{Benchmarks}} & & \\
\cmidrule(lr){2-5} \cmidrule(lr){6-14}
Method & KL$_{\text{t10},C4}\!\downarrow$ & EAR$_{\text{t10},C4}\!\uparrow$ & KL$_{\text{W2}}\!\downarrow$ & KL$_{\text{C4}}\!\downarrow$ & WG & ARC-C & ARC-E & LMB & PIQA & HS10 & MMLU & IFE-P & GSM8K & Avg & \textbf{Recovery} \\
\midrule
\textit{BF16 (baseline)} & \textit{0.0000} & \textit{1.0000} & \textit{0.0000} & \textit{0.0000} & \textit{69.46} & \textit{75.67} & \textit{54.10} & \textit{58.65} & \textit{78.40} & \textit{74.83} & \textit{74.31} & \textit{82.26} & \textit{91.66} & \textit{73.26} & \textit{100.00\%} \\
\midrule
SFP4 & 0.0528 & 0.9047 & 0.1084 & 0.0681 & \textbf{69.22} & 75.29 & 53.84 & 55.40 & 76.99 & 73.05 & 71.73 & 78.19 & 89.54 & \textbf{71.47} & \textbf{97.56\%} \\
NF4 & 0.0490 & 0.9083 & 0.1054 & 0.0637 & 67.72 & 75.13 & 52.39 & 55.42 & 77.20 & 73.17 & 72.03 & 78.00 & 88.93 & 71.11 & 97.07\% \\
IF4 & 0.0471 & 0.9097 & 0.1125 & 0.0616 & 67.80 & 73.91 & 51.88 & 54.82 & 76.61 & 73.11 & 71.96 & 77.26 & 89.54 & 70.76 & 96.60\% \\
PO2(Split87) & \textbf{0.0376} & \textbf{0.9195} & \textbf{0.0869} & \textbf{0.0487} & 67.56 & 72.22 & 50.51 & 54.80 & 76.93 & 73.57 & 72.56 & \textbf{78.56} & 89.69 & 70.71 & 96.52\% \\
PO2(NF4) & 0.0406 & 0.9166 & 0.0919 & 0.0525 & 67.40 & 72.56 & 50.85 & 54.90 & \textbf{77.75} & 73.42 & \textbf{72.72} & 76.71 & 89.76 & 70.68 & 96.48\% \\
BOF4 & 0.0460 & 0.9109 & 0.1030 & 0.0597 & 66.38 & 72.64 & 50.43 & 55.39 & 77.09 & \textbf{73.63} & 71.76 & 78.56 & 90.14 & 70.67 & 96.47\% \\
4over6 & 0.0516 & 0.9059 & 0.1112 & 0.0670 & 67.80 & 74.66 & 51.45 & \textbf{56.80} & 76.33 & 73.24 & 70.86 & 78.00 & 86.58 & 70.64 & 96.42\% \\
Split87 & 0.0438 & 0.9132 & 0.0957 & 0.0571 & 67.56 & 72.10 & 49.66 & 56.47 & 76.66 & 73.18 & 71.94 & 78.56 & 88.17 & 70.48 & 96.20\% \\
NVINT4 & 0.0714 & 0.8872 & 0.1558 & 0.0957 & 65.59 & \textbf{76.85} & \textbf{55.03} & 52.98 & 76.71 & 71.60 & 70.79 & 75.60 & 87.64 & 70.31 & 95.98\% \\
NVFP4 & 0.0586 & 0.8993 & 0.1305 & 0.0767 & 66.54 & 71.84 & 49.74 & 54.08 & 76.61 & 72.77 & 71.40 & 77.45 & 87.04 & 69.72 & 95.17\% \\
\bottomrule
\end{tabular}%
}
\end{table*}

\begin{table*}[t]
\centering
\caption{Weight and activation quantization (W4A4) on \textbf{Qwen3.5-9B}. KL columns report the KL divergence between the quantized and BF16 output distributions on the indicated calibration set ($\downarrow$); EAR$_{\text{top10},C4}$ (Expected Acceptance Rate), the maximum token-agreement probability under optimal coupling on C4 ($\uparrow$) Benchmark columns: Winogrande (WG), ARC-C, ARC-E, Lambada-standard (LMB), PIQA, Hellaswag 10-shot (HS10), MMLU, IFEval-Prompt (IFE-P), and GSM8K-CoT. \emph{Avg} is their mean and \emph{Recovery} is \emph{Avg}\,$\div$\,BF16. Best non-BF16 result per column is in \textbf{bold}.}
\label{tab:wa-qwen359b}
\setlength{\tabcolsep}{3pt}
\resizebox{\textwidth}{!}{%
\begin{tabular}{l cccc ccccccccc c c}
\toprule
 & \multicolumn{4}{c}{\textbf{Distribution-level}} & \multicolumn{9}{c}{\textbf{Benchmarks}} & & \\
\cmidrule(lr){2-5} \cmidrule(lr){6-14}
Method & KL$_{\text{t10},C4}\!\downarrow$ & EAR$_{\text{t10},C4}\!\uparrow$ & KL$_{\text{W2}}\!\downarrow$ & KL$_{\text{C4}}\!\downarrow$ & WG & ARC-C & ARC-E & LMB & PIQA & HS10 & MMLU & IFE-P & GSM8K & Avg & \textbf{Recovery} \\
\midrule
\textit{BF16 (baseline)} & \textit{0.0000} & \textit{1.0000} & \textit{0.0000} & \textit{0.0000} & \textit{73.56} & \textit{74.24} & \textit{55.89} & \textit{63.19} & \textit{79.92} & \textit{78.70} & \textit{78.67} & \textit{83.55} & \textit{93.03} & \textit{75.64} & \textit{100.00\%} \\
\midrule
PO2(Split87) & \textbf{0.0310} & \textbf{0.9278} & \textbf{0.1164} & \textbf{0.0402} & 72.69 & 76.43 & 56.23 & \textbf{62.45} & 80.14 & 78.27 & 76.99 & \textbf{84.29} & 91.96 & \textbf{75.50} & \textbf{99.81\%} \\
BOF4 & 0.0381 & 0.9205 & 0.1410 & 0.0491 & 71.59 & 76.77 & 57.08 & 62.08 & 80.09 & 78.18 & 76.63 & 83.18 & 91.74 & 75.26 & 99.50\% \\
PO2(NF4) & 0.0350 & 0.9233 & 0.1377 & 0.0456 & 72.61 & 74.79 & 55.55 & 62.24 & \textbf{80.36} & \textbf{78.56} & \textbf{77.10} & 82.62 & \textbf{92.42} & 75.14 & 99.34\% \\
NF4 & 0.0388 & 0.9194 & 0.1470 & 0.0507 & 72.06 & \textbf{78.41} & \textbf{57.17} & 61.50 & 79.65 & 77.72 & 76.36 & 80.96 & 91.81 & 75.07 & 99.25\% \\
IF4 & 0.0400 & 0.9184 & 0.1427 & 0.0519 & \textbf{73.56} & 75.34 & 56.14 & 61.85 & 79.49 & 78.22 & 76.76 & 82.62 & 91.36 & 75.04 & 99.21\% \\
4over6 & 0.0416 & 0.9168 & 0.1416 & 0.0545 & 72.85 & 76.05 & 56.83 & 59.67 & 79.33 & 77.28 & 76.41 & 83.73 & 91.74 & 74.88 & 98.99\% \\
Split87 & 0.0349 & 0.9237 & 0.1274 & 0.0449 & 72.93 & 75.04 & 55.03 & 61.17 & 79.27 & 78.08 & 76.47 & 82.99 & 91.81 & 74.76 & 98.83\% \\
NVFP4 & 0.0451 & 0.9135 & 0.1497 & 0.0586 & 72.06 & 75.46 & 55.12 & 62.04 & 79.22 & 77.68 & 75.95 & 82.99 & 91.74 & 74.70 & 98.75\% \\
SFP4 & 0.0406 & 0.9177 & 0.1415 & 0.0526 & 71.74 & 73.57 & 54.01 & 60.33 & 78.62 & 77.36 & 75.59 & 84.29 & 90.90 & 74.05 & 97.89\% \\
NVINT4 & 0.0605 & 0.8984 & 0.1888 & 0.0800 & 71.19 & 75.25 & 55.38 & 60.55 & 79.33 & 76.70 & 75.39 & 80.96 & 89.99 & 73.86 & 97.65\% \\
\bottomrule
\end{tabular}%
}
\end{table*}

\begin{table*}[t]
\centering
\caption{Weight and activation quantization (W4A4) on \textbf{Qwen3.5-27B}. KL columns report the KL divergence between the quantized and BF16 output distributions on the indicated calibration set ($\downarrow$); EAR$_{\text{top10},C4}$ (Expected Acceptance Rate), the maximum token-agreement probability under optimal coupling on C4 ($\uparrow$) Benchmark columns: Winogrande (WG), ARC-C, ARC-E, Lambada-standard (LMB), PIQA, Hellaswag 10-shot (HS10), MMLU, IFEval-Prompt (IFE-P), and GSM8K-CoT. \emph{Avg} is their mean and \emph{Recovery} is \emph{Avg}\,$\div$\,BF16. Best non-BF16 result per column is in \textbf{bold}.}
\label{tab:wa-qwen3527b}
\setlength{\tabcolsep}{3pt}
\resizebox{\textwidth}{!}{%
\begin{tabular}{l cccc ccccccccc c c}
\toprule
 & \multicolumn{4}{c}{\textbf{Distribution-level}} & \multicolumn{9}{c}{\textbf{Benchmarks}} & & \\
\cmidrule(lr){2-5} \cmidrule(lr){6-14}
Method & KL$_{\text{t10},C4}\!\downarrow$ & EAR$_{\text{t10},C4}\!\uparrow$ & KL$_{\text{W2}}\!\downarrow$ & KL$_{\text{C4}}\!\downarrow$ & WG & ARC-C & ARC-E & LMB & PIQA & HS10 & MMLU & IFE-P & GSM8K & Avg & \textbf{Recovery} \\
\midrule
\textit{BF16 (baseline)} & \textit{0.0000} & \textit{1.0000} & \textit{0.0000} & \textit{0.0000} & \textit{80.66} & \textit{79.88} & \textit{61.35} & \textit{71.36} & \textit{81.99} & \textit{79.70} & \textit{84.49} & \textit{87.25} & \textit{95.68} & \textit{80.26} & \textit{100.00\%} \\
\midrule
IF4 & 0.0215 & 0.9416 & 0.1101 & 0.0274 & 79.24 & 82.53 & \textbf{62.71} & 70.21 & 82.43 & 79.50 & 83.75 & 86.88 & \textbf{96.29} & \textbf{80.39} & \textbf{100.16\%} \\
NVFP4 & 0.0264 & 0.9358 & 0.1208 & 0.0335 & 79.56 & \textbf{82.79} & 62.20 & 70.33 & 82.37 & 78.84 & 84.16 & 87.80 & 95.30 & 80.37 & 100.14\% \\
NF4 & 0.0225 & 0.9404 & 0.1065 & 0.0286 & 79.24 & 82.32 & 61.69 & 70.15 & 82.54 & 79.29 & 84.03 & 88.17 & 95.75 & 80.35 & 100.12\% \\
PO2(Split87) & \textbf{0.0173} & \textbf{0.9476} & \textbf{0.0992} & \textbf{0.0219} & 79.01 & 82.20 & 61.77 & 70.54 & 81.83 & 78.47 & \textbf{84.47} & \textbf{88.54} & 95.83 & 80.30 & 100.04\% \\
4over6 & 0.0234 & 0.9394 & 0.1092 & 0.0297 & 79.72 & 82.03 & 62.37 & 70.04 & 81.83 & 79.73 & 82.98 & 87.43 & 96.13 & 80.25 & 99.99\% \\
SFP4 & 0.0236 & 0.9392 & 0.1084 & 0.0299 & 80.51 & 82.45 & 61.09 & 70.39 & 82.43 & 79.55 & 83.91 & 85.58 & 95.91 & 80.20 & 99.92\% \\
Split87 & 0.0199 & 0.9440 & 0.1061 & 0.0252 & 80.19 & 81.82 & 61.86 & 69.38 & 81.94 & 78.65 & 84.01 & 87.43 & 96.13 & 80.16 & 99.87\% \\
BOF4 & 0.0211 & 0.9421 & 0.1074 & 0.0269 & 79.01 & 81.31 & 62.37 & 70.68 & 82.05 & 78.99 & 83.65 & 87.25 & 95.83 & 80.13 & 99.83\% \\
NVINT4 & 0.0321 & 0.9281 & 0.1378 & 0.0417 & 79.48 & 81.61 & 61.26 & 69.01 & 82.21 & \textbf{80.24} & 82.87 & 87.06 & 96.13 & 79.99 & 99.66\% \\
PO2(NF4) & 0.0189 & 0.9452 & 0.1010 & 0.0239 & 80.35 & 81.73 & 60.32 & 69.09 & 81.99 & 78.43 & 83.64 & 87.99 & 95.98 & 79.95 & 99.61\% \\
\bottomrule
\end{tabular}%
}
\end{table*}

\end{document}